\newtheorem{lemma}{Lemma}
\newtheorem{theorem}{Theorem}
\def\BibTeX{{\rm B\kern-.05em{\sc i\kern-.025em b}\kern-.08em
    T\kern-.1667em\lower.7ex\hbox{E}\kern-.125emX}}
\newcommand{\kw}[1]{{\ensuremath {\mathsf{#1}}}\xspace}
\newcommand{\stitle}[1]{ \noindent{\textbf{#1}}}
\newcommand{\ltos}{\kw{L2SP}}
\newcommand{\ltoss}{\kw{L2SPs}}
\newcommand{\squishlisttight}{
 \begin{list}{$\bullet$}
  { \setlength{\itemsep}{0pt}
    \setlength{\parsep}{0pt}
    \setlength{\topsep}{0pt}
    \setlength{\partopsep}{0pt}
    \setlength{\leftmargin}{2em}
    \setlength{\labelwidth}{1.5em}
    \setlength{\labelsep}{0.5em}
 } }
\newcommand{\squishnumlist} {
\newcounter{qcounter}
\begin{list}{\arabic{qcounter}.~}{\usecounter{qcounter}} 
{  \setlength{\itemsep}{0pt}
    \setlength{\parsep}{0pt}
    \setlength{\topsep}{0pt}
    \setlength{\partopsep}{0pt}
    \setlength{\leftmargin}{2em}
    \setlength{\labelwidth}{1.5em}
    \setlength{\labelsep}{1.5em}
}}
\newcommand{\squishend}{
  \end{list}
}
\begin{document}
\title{Path-LLM: A Shortest-Path-based LLM Learning for Unified Graph Representation}

\author{Wenbo Shang}
\affiliation{%
  \institution{Hong Kong Baptist University}
  \city{Hong Kong}
  \state{China}
}
\email{cswbshang@comp.hkbu.edu.hk}

\author{Xuliang Zhu}
\affiliation{%
  \institution{Shanghai Jiao Tong University}
  \city{Shanghai}
  \country{China}
}
\email{zhu.xl@sjtu.edu.cn}

\author{Xin Huang}
\affiliation{%
  \institution{Hong Kong Baptist University}
  \city{Hong Kong}
  \country{China}
}
\email{xinhuang@comp.hkbu.edu.hk}


\begin{abstract}
Unified graph representation learning aims to {generate} node embeddings, which can be applied to multiple downstream applications of graph analytics. 
However, existing studies based on graph neural networks and language models either suffer from the limitations of numerous training needs toward specific downstream predictions, poor generalization, or shallow semantic features. In this work, we propose a novel Path-LLM model to efficiently learn unified graph representation, which leverages a powerful large language model (LLM) to incorporate our proposed path features. Our Path-LLM framework consists of four well-designed techniques. First, we develop a new mechanism of long-to-short shortest path (\ltos) selection, which can cover key connections between different dense groups. An in-depth analysis and comparison of different path selections is conducted to justify the rationale behind our designed \ltos method. Next, we design path textualization to obtain L2SP-based training texts with key phrase selection from node text attributes. We then feed the texts into {a self-supervised LLM training process to align next node/edge generation in L2SP with next token generation in causal language modeling for graph representation learning} and finally extract the unified graph embeddings.
We theoretically analyze the \emph{algorithm complexity} of our Path-LLM approach. Extensive experiments on large-scale graph benchmarks validate the superiority of Path-LLM against state-of-the-art methods \emph{WalkLM}, \emph{GraphGPT}, \emph{OFA}, and \emph{GraphTranslator} on two classical graph learning tasks (node classification and edge validation) and one NP-hard graph query processing task (keyword search). Compared with WalkLM, our approach \emph{saves more than 90\% of training paths on millions-scale graphs} and \emph{runs at most 35\textbf{x} faster}. Besides the quality and efficiency evaluations, we have also conducted several \emph{ablation studies}, \emph{case studies}, and \emph{embedding visualizations} to show the effectiveness of Path-LLM.

\end{abstract}

\maketitle




\section{INTRODUCTION}
  \begin{figure}
  \includegraphics[width=0.47\textwidth]{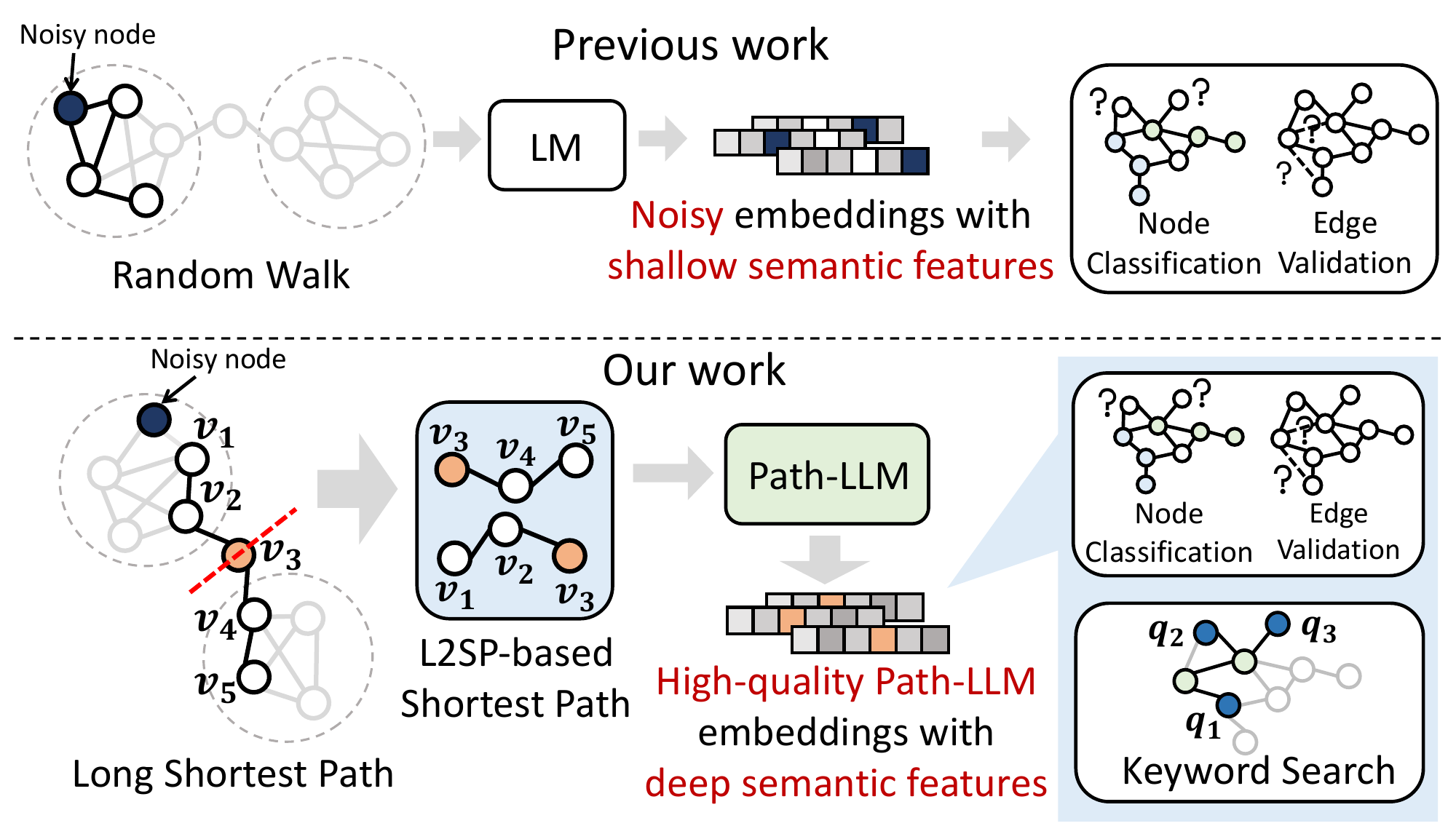}
  \caption{A comparison of existing works and our work. }
  \label{fig:intro}
\end{figure}
Graphs play a crucial role in various real-world application scenarios, including academic networks, biomedical graphs, social networks, financial networks, and so on~\cite{zhou2020query, fang2022densest, sen2008collective, wu2018moleculenet, broder2000graph}. These graph data associated with textual information, such as the attributes of nodes and edges, representing complex and diverse semantics, are commonly known as text-attributed graphs (TAGs)~\cite{yang2020scaling,kochsiek2021parallel, li2025grela}. A wide range of graph representation learning models has been proposed, including successful graph neural networks (GNNs)~\cite{zhang2022benchmark, gcn, velivckovic2017graph, gilmer2017neural}. However, these models often require sufficient training toward specific downstream predictions~\cite{zhu2021graph, walklm} and focus more on processing graph structure, ignoring rich semantics.
For obtaining graph representation with richer semantics, existing path-based works~\cite{walklm} have recently proposed to integrate \emph{LMs} and \emph{random walks} to derive unified graph embeddings for multiple graph learning tasks, e.g., node classification and edge validation. However, it still suffers from the following three limitations. 
1) The sampled random walks can hardly cover the bridge edges between different dense groups, as the path of the random walk has a high probability of falling within one dense subgraph; 2) random walks can easily involve noisy nodes to damage the embedding quality. Meanwhile, the node texts of irregular paths cannot match the linguistic rules, bringing more difficulty for learning models, and 3) the transferability of existing path-based works to LLMs with causal language modeling is limited, as it is primarily designed for small LMs with masked language modeling, such as BERT~\cite{devlin2019bert} and DeBERTa~\cite{hedeberta}, which generally have weaker performance than the recent LLMs~\cite{trustLLM}.

To tackle the above limitations, we propose a novel graph representation learning model of Path-LLM, as shown in Figure~\ref{fig:intro}. Our developed Path-LLM utilizes \emph{powerful LLMs with causal language modeling}, \emph{well-designed shortest-path-based graph features} and \emph{a shortest-path-based self-supervised learning method} to learn unified graph embeddings for several downstream tasks, including node classification, edge validation, and one typical NP-hard graph analytics task of keyword search~\cite{wang2010survey, li2008ease}. 
Path-LLM enjoys two major advantages based on \emph{LLMs} and \emph{shortest paths}. First, the advent of LLMs (e.g., GPT~\cite{gpt3,gpt4} and Llama~\cite{llama2}) has revolutionized the field of language processing. LLMs possess billions of parameters and have been trained on extensive corpora, enabling them to exhibit powerful semantic representation capabilities and strong generalization abilities~\cite{instructGPT, zhuang2024toolqa, zhang2023prompting}. 
Second, the shortest path is the closest path between two nodes, which can avoid unnecessary detours through noisy nodes and cycles, in contrast to random walks. The shortest paths usually indicate regular paths to follow in a complex graph, indicating good-quality learning features for LLMs. {Third, shortest paths are suitable for causal language modeling in LLMs, as searching the next node based on previous nodes is more aligned with generating next tokens based on prefix tokens, against masked language modeling in existing works.}

It is challenging to achieve a diverse and high-quality set of shortest paths and enable LLMs to understand complex attributes and graph structures in self-supervised manners. To tackle it, we construct a new path extraction mechanism to select training features, called the long-to-short shortest paths. 
It first samples a few long shortest paths to capture both cross-group connections and internal group structures and then cuts them into short paths in designed ways for Path-LLM effective and efficient learning. 
In addition, 
we design a path textualization function to transform  \ltos-based structural information into \ltos-based texts for Path-LLM learning. To ease the Path-LLM learning process, we conduct data cleaning and key phrase selection based on PositionRank~\cite{florescu2017positionrank} to reconstruct new text attributes of nodes during path textualization.
To embed graph structures into the semantic space, we 
feed \ltos-based texts into Path-LLM for a self-supervised pre-training process. 
During the training process, we align L2SP-based shortest paths with the LLM-learned linguistic rules.
As Path-LLM learns the order of tokens in the \ltos-based text, it also learns the order of nodes and edges within the L2SP, thereby learning graph structures.
Finally, we derive an integrated embedding for all nodes from the frozen Path-LLM, which is effective for several downstream tasks. To summarize, we make the following contributions:
\begin{itemize}
\item We propose a novel Path-LLM model for generating unified graph embeddings, which can effectively handle both homogeneous and heterogeneous graphs for multiple downstream graph analytics tasks. 
(Section~\ref{path-LLM})
\item We first propose the long-to-short shortest paths (\ltos) and design a path textualization function to construct \ltos-based texts.
Moreover, we conduct a comprehensive analysis by comparing our \ltos method against different path selections to show the advantages and suitability of short paths and \ltos selection in Path-LLM learning. (Section~\ref{sec:l2sp textualization})
\item {We propose a new Path-LLM graph feature learning method for graph embedding learning, which aligns next node/edge generation in L2SP with next token generation in causal language modeling. We then extract Path-LLM embeddings for downstream tasks. }
Furthermore, we give a complexity analysis of Path-LLM, and provide theoretical proof demonstrating that Path-LLM can learn the graph structure through self-supervised learning of L2SP-based texts. (Section~\ref{sec:training and embedding})
\item To illustrate the usefulness of our learned embedding results, we revisit a useful but challenging graph task of keyword search~\cite{keywordsearch}, which finds a subgraph covering all keywords with tight closeness of topology structure and node semantics. We develop a method of weighted TAG construction based on the Path-LLM embedding vectors and introduce approximate solutions.  (Section~\ref{new-task}) 
\item We conduct extensive experiments to validate the effectiveness of Path-LLM outperforming the state-of-the-art WalkLM method on four real-world benchmarks. Our Path-LLM model only uses 9\% of WalkLM's training paths on average. We also conduct a case study of keyword search on PubMed and embedding visualization to show the usefulness of our Path-LLM model. (Section~\ref{experiment})  
\end{itemize}
We give preliminaries 
in Section~\ref{sec:preliminary}. We review related work in Section~\ref{sec:related work} and conclude the paper in Section~\ref{sec:conclusion}. 


\section{RELATED WORK}
\label{sec:related work}
In recent years, various graph representation learning approaches have been studied 
\cite{shang2024survey, ju2024comprehensive, li2024glbench, li2023survey, wang2022survey}. 

\stitle{GNN-based graph representation learning.}
Graph neural networks (GNNs) have been employed to learn node representations by aggregating information from neighboring nodes on graphs \cite{zhou2020graph,wu2020comprehensive,graphsage,gcn, li2020distance, zhu2020deep, velivckovic2017graph, gao2021ics}. Some studies~\cite{hu2020gpt,jiang2021pre,luo2023self,you2020graph} have utilized self-supervision techniques to pre-train a sophisticated GNN model for unified graph embedding learning, such as contrastive learning. However, this method predominantly focuses on the graph structure, leading to shallow and rough alignment of semantic information and insufficient integration of structural and semantic information.

\begin{figure*}
  \includegraphics[width=0.98\textwidth]{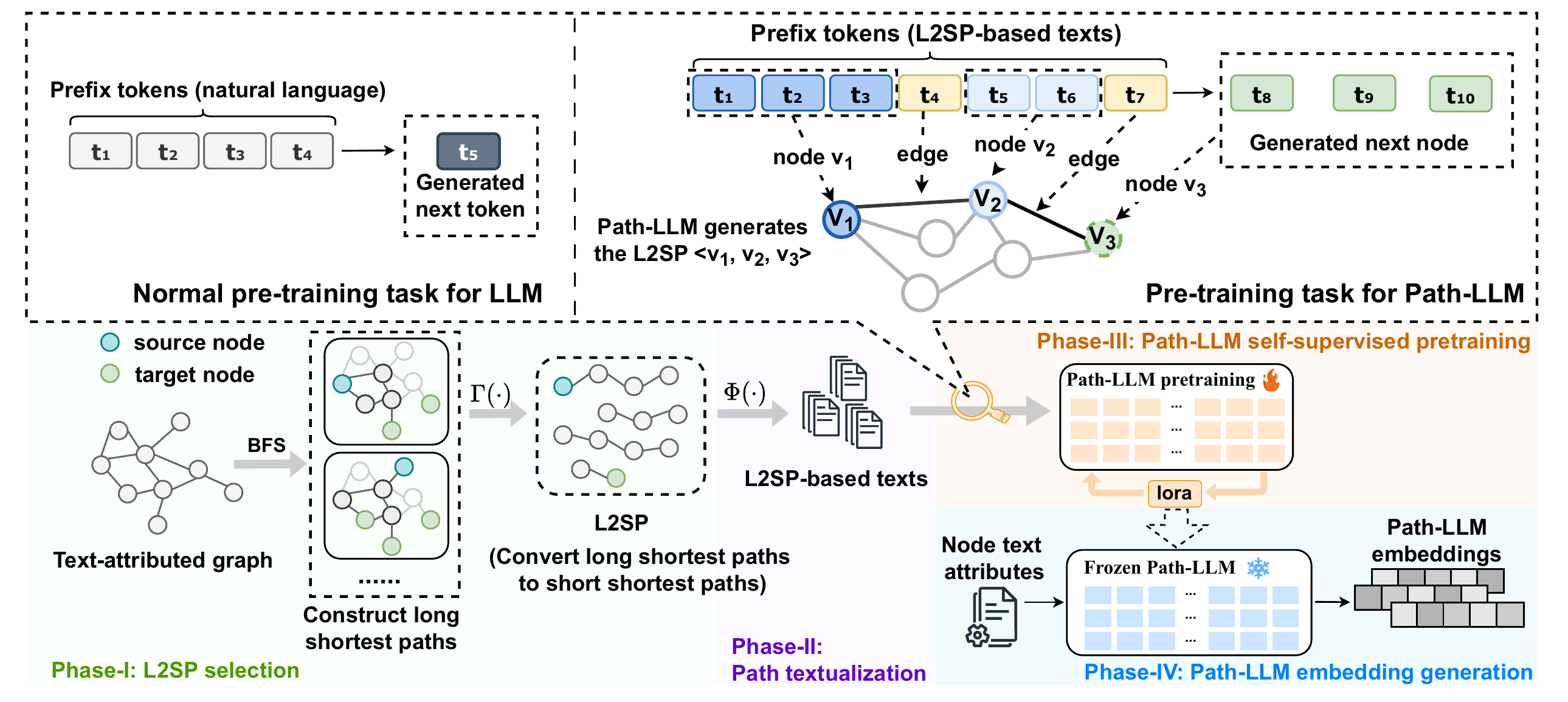}
  \caption{Our proposed Path-LLM framework involves four key components: (1) \underline{Long-to-short shortest path (L2SP) selection}, converting sampled long shortest paths to L2SP-based shortest paths. (2) \underline{Path textualization}, transforming L2SP-based shortest paths to L2SP-based texts. (3) \underline{Path-LLM pre-training}, learning graph structures from L2SP-based texts. (4) \underline{Path-LLM embedding generation}, deriving the final embeddings from the frozen Path-LLM.}
  \label{fig:framework}
\end{figure*}

\stitle{LM-based graph representation learning.}
Several studies have employed diverse methodologies to generate graph embeddings using Language Models (LM) \cite{li2023grenade,wang2023dynamic,herzig2023incorporating,zerog,walklm, he2023harnessing, pan2024survey}. 
Current methodologies for LLM-based graph representation can be classified into two main categories: supervised and self-supervised graph representation methods.
Numerous cutting-edge supervised techniques are specifically designed for single graph learning tasks~\cite{ye2024language}, such as node classification, exemplified by TAPE~\cite{he2023harnessing} and GraphAd-apter~\cite{huang2024can}. Furthermore, further supervised approaches train their models across one more task~\cite{chen2024llaga}, such as OFA~\cite{liu2024one} and GraphTranslator~\cite{zhang2024graphtranslator}. Unlike supervised methods, which require extensive labeled data and are specifically trained for particular downstream tasks, self-supervised methods do not need labeled data or human manual resources.
Currently, the state-of-the-art approach, WalkLM \cite{walklm}, still utilizes LM to generate embeddings, which is markedly inferior to that of LLM \cite{instructGPT,trustLLM}, such as GPT-4 \cite{gpt4} and Llama \cite{llama2}. 
Although GraphGPT~\cite{tang2024graphgpt} employs a self-supervised learning approach to adapt LLMs for various downstream tasks, it primarily focuses on enhancing question-answering and reasoning capabilities rather than deriving unified graph embeddings.
\emph{Different from most existing studies, our emphasis is to generate unified embeddings for each node to capture both its topological and semantic features. Our learning model leverages advanced LLMs and well-developed shortest path selection to tackle graph learning tasks and one NP-hard graph querying task of keyword search.}

\section{PRELIMINARY}
\label{sec:preliminary}
In this section, we introduce the causal language modeling in LLMs, notations, and the objective of our Path-LLM model.
\subsection{Causal Language Modeling in LLMs}
For the pre-training process, LLMs mainly use Causal Language Modeling (CLM) training techniques, which are trained to predict the next token $x_i$ in a sequence $x=\{x_1,x_2,...,x_q\}$ based on prefix tokens $x_{<i}=\{x_1,x_2,...,x_{i-1}\}$. CLM is commonly used to train LLMs like GPT~\cite{gpt3,gpt4} and Llama~\cite{llama2}. 
LLM is typically trained to optimize a conditional probability distribution $p(x_i|x_{<i})$~\cite{he2023harnessing}, which assigns a probability to each possible $x'_i \in \mathcal{D}$ given prefix tokens $x_{<i}$, where $\mathcal{D}$ is the LLM vocabulary. 
Thus, the probability of the output sequence $x$~\cite{he2023harnessing, yadkori2024believe, requeima2024llm} can be formulated as :
\begin{equation}
\label{llmprob}
    p(x) = \prod\limits_{i=1}^q p(x_i|x_{<i}).
\end{equation}
Notably, the probability of generating token $x_i$ depends only on the prefix tokens $x_{<i}$, showing that LLMs are blind to the following tokens $x_{>i}$ after $x_i$.

\subsection{Problem Formulation}
\stitle{Text-attributed graphs.} A \underline{t}ext-\underline{a}ttributed \underline{g}raph (TAG) can be represented as $G=(V,E,\mathcal{X}_v,\mathcal{X}_e)$, where $V$ and $E$ denote the set of nodes and edges. $V=\{v_1,v_2,...,v_n\}$ is the set of $n$ nodes paired with raw text attributes $\mathcal{X}_v=\{\mathcal{X}_{v_1},\mathcal{X}_{v_2},...,\mathcal{X}_{v_n}\}$. $E=\{e_{i,j}\}$ is the set of edges where $e_{i,j}$ is an edge from $v_i$ to $v_j$ and is paired with raw text attributes $\mathcal{X}_e=\{\mathcal{X}_{e_{i,j}}\}$. The path in a TAG can be defined as $\mathcal{P}=\langle v_1, v_2,...,v_\ell\rangle$, such that $v_i$ is adjacent to $v_{i+1}$ for $1 \leq i < \ell$. Such a path $\mathcal{P}$ is called a path of length $\ell$ from $v_1$ to $v_\ell$.
Furthermore, the shortest path from $s$ to $t$ is the path $\mathcal{P}=\langle v_1, v_2,...,v_\ell\rangle$ (where $v_1=s$ and $v_\ell=t$) that over all possible paths between these two nodes with the fewest edges.


\stitle{Objective of Path-LLM.} Given a text-attributed graph $G$, the~goal of Path-LLM is to generate unified graph embeddings $\xi$ integrating complex graph structures and text attribute semantics in $G$, and then improve task performances on multiple downstream tasks (e.g., node classification, edge validation and keyword search). Specifically, each node $v_i \in V$ is paired with the embedding $\xi_{v_i}$ extracted from Path-LLM. {Note that, different from supervised methods, we focus on self-supervised graph representation learning methods, where models learn by solving pretext tasks, with supervision signals automatically derived from the data itself~\cite{liu2022graph, xie2022self,liu2021self}.}

\section{An  Overview of Path-LLM}
\label{path-LLM}
In this section, we introduce our Path-LLM model, which learns graph structures through \emph{our proposed mechanism of L2SP selection} and \emph{LLM self-supervised pre-training}. 
Figure \ref{fig:framework} depicts the framework of Path-LLM with four key components in different phases.


\squishlisttight
\item \textbf{Phase-I: L2SP selection}. To capture comprehensive graph properties, we first sample a few shortest paths of long length widely across the whole network and then cut them into short paths. We construct these long-to-short shortest paths (\ltos) as a base set of important path-based features for Path-LLM.  
\item \textbf{Phase-II: Path textualization}. Next, we employ the path textual function to obtain \ltos-based texts, incorporating the properties of \ltos and key phrases in text attributes, with each text representing a single \ltos within the graph.
\item \textbf{Phase-III: \ltos-based graph feature learning.} These \ltos-based texts form a comprehensive dataset and are subsequently fed into the Path-LLM for self-supervised learning. As the Path-LLM acquires the ability to generate \ltos-based texts, it inherently learns to generate \ltos-based shortest paths. 
\item \textbf{Phase-IV: Path-LLM embedding generation}. Finally, we utilize the frozen Path-LLM model to extract Path-LLM embeddings that combine shortest path-based graph features with deep semantic information derived from node text attributes.
\end{list}

\section{L2SP Selection and Textualization}
\label{sec:l2sp textualization}
This section emphasizes the methodology for transforming the graph into a textual format, which provides Path-LLM with a high-quality graph training corpus. This conversion maintains the structural features and essential semantic information from the text attributes of nodes and edges.
\subsection{Phase-I: L2SP Selection}
{Generally, a graph has complex structures composed of dense groups and interconnections among them. 
Capturing features of these dense groups and connections is vital.
Hence, to effectively obtain the critical structure of dense groups and their cross-over connections, we first sample long paths $\mathcal{P}_{long}$ from $G$ and then cut $\mathcal{P}_{long}$ into small ones for easy learning by Path-LLM in Algorithm~\ref{algo:l2sp}.

\begin{algorithm}[t]
\small
  \caption{L2SP Selection}
  \label{algo:l2sp}
  \begin{algorithmic}[1]
    \Require A text-attributed graph $G=(V, E, \mathcal{X}_v, \mathcal{X}_e)$, the number of sampling nodes $b$, the minimum long path length $L$, the number of sampling shortest paths $k$, the maximum short path length~$\ell$.
    \Ensure $\mathbb{P}_{short}$ shortest paths that have no length greater than $\ell$.
    \State Randomly select a $b$-sized node set $\mathcal{S}$;
    \For {$s_i \in \mathcal{S} $}
        \State $\mathcal{T}_i\leftarrow \{\tau_i|dist(s_i, \tau_i)\geq L,\tau_i\in V\}$ by BFS;
        \State Randomly select a target node $\tau_i$ in $\mathcal{T}_i$;
        \State $\mathbb{P}_{long}\leftarrow $ Select $k$ shortest paths from $s_i$ to $\tau_i$ by BFS;
    \For{$\mathcal{P}_{long}\langle v_1, v_2,..., v_L\rangle \in \mathbb{P}_{long}$}
        \State $\mathcal{P}_{short} = \{\langle v_{i+1}, ..., v_{i+\ell}\rangle | i = \alpha(\ell-1), 0 \leq \alpha < \lfloor \frac{L}{\ell - 1} \rfloor - 1\} \cup \{\langle v_{(\lfloor \frac{L}{\ell - 1} \rfloor - 1)(\ell - 1) + 1}$ $, ..., v_{L}\rangle\}$;
        \State $\mathbb{P}_{short} \leftarrow \mathbb{P}_{short} \cup \mathcal{P}_{short}$;  
    \EndFor
    \EndFor
    \State \Return{$\mathbb{P}_{short}$};
  \end{algorithmic}
\end{algorithm}

\stitle{Long-path sampling}. Randomly selecting a pair of nodes with a long shortest path between them is highly challenging, as many graphs have a small-world property. To effectively obtain $\mathcal{P}_{long}$, we propose a \emph{fast long-path sampling algorithm}. 
We first randomly select a set $\mathcal{S}=\{s_i\}_{i=1}^b$ of $b$ nodes as the source nodes (line 1). 
A full-graph breadth-first search (BFS)~\cite{bfs} is then conducted for each source node $s_i$ to identify nodes in $G$ at a distance greater than $L$ edges from $s_i$ (lines 2-3). These identified nodes collectively form a candidate set $\mathcal{T}_i=\{\tau_{ij}\}_{j=1}^{b'}$ of target nodes, with the size of $\mathcal{T}_i$ being $b'$. 
In other words, each node in $\mathcal{T}_i$ is beyond the $L$-hop distance from $s_i$. Within this candidate set $\mathcal{T}_i$, a node is randomly chosen as the target node, denoted as $\tau_i$. 
The BFS algorithm is then employed to find a shortest path union cover between $s_i$ and $\tau_i$. 
To reduce the overlapping of these long paths, a $k$-size subset of all long shortest paths with the same $s_i$ and $\tau_i$ is randomly selected to constitute the result $\mathbb{P}_{long}$ (lines 4-5). 
\\
\textbf{Long-to-short path conversion.} 
To convert sampled long shortest-paths $\mathcal{P}_{long}$ into short ones, we propose the \ltos conversion method to cut them into short shortest-paths
(lines 6-8). 
 Given one long shortest path $\mathcal{P}_{long}=\langle v_1, v_2,..., v_L\rangle$
 and a parameter of maximum length $\ell$, i.e., $\mathcal{P}_{short} = \{\langle v_{i+1}, ..., v_{i+\ell}\rangle | i = \alpha(\ell-1), 0 \leq \alpha < \lfloor \frac{L}{\ell - 1} \rfloor - 1\} \cup \{\langle v_{(\lfloor \frac{L}{\ell - 1} \rfloor - 1)(\ell - 1) + 1}$ $, ..., v_{L}\rangle\}$. 
For example, 
{assume that a $\mathcal{P}_{long}=\langle v_1, v_2, v_3, v_4, v_5, v_6, v_7, v_8, v_9, v_{10}\rangle$ and $\ell=3$, the output of $\Gamma(\mathcal{P}_{long})$ is a set of five shortest paths  as $\mathcal{P}_{short}=\{\langle v_1, v_2, v_3\rangle,\langle v_3, v_4, v_5\rangle,\langle v_5, v_6, v_7\rangle,$ $\langle v_7, v_8, v_9\rangle,\langle v_9, v_{10}\rangle\}$. }
Note that these five short paths can be connected via the cutting nodes, e.g., $v_3$, $v_5$, $v_7$, and $v_9$.

\begin{figure}[t]
 \hspace{-0.3cm} \includegraphics[width=0.5\textwidth]{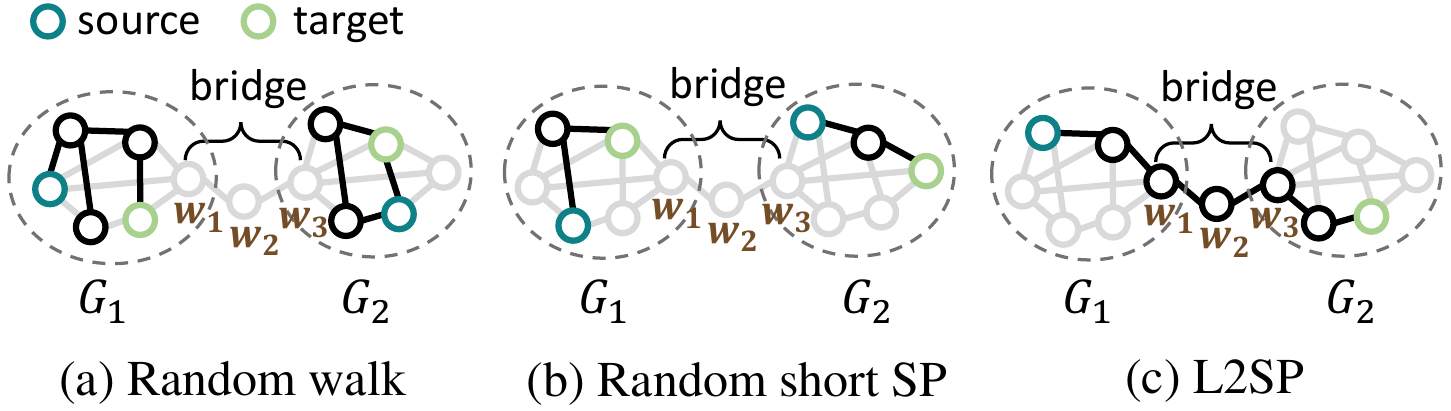}
  \caption{Comparison between random walks, randomly sampled short shortest paths, and \ltos-based paths.}
  \label{fig:long to short}
\end{figure}

\begin{figure*}[t]
    \centering
  \includegraphics[width=0.90\textwidth,height=0.24\textwidth]{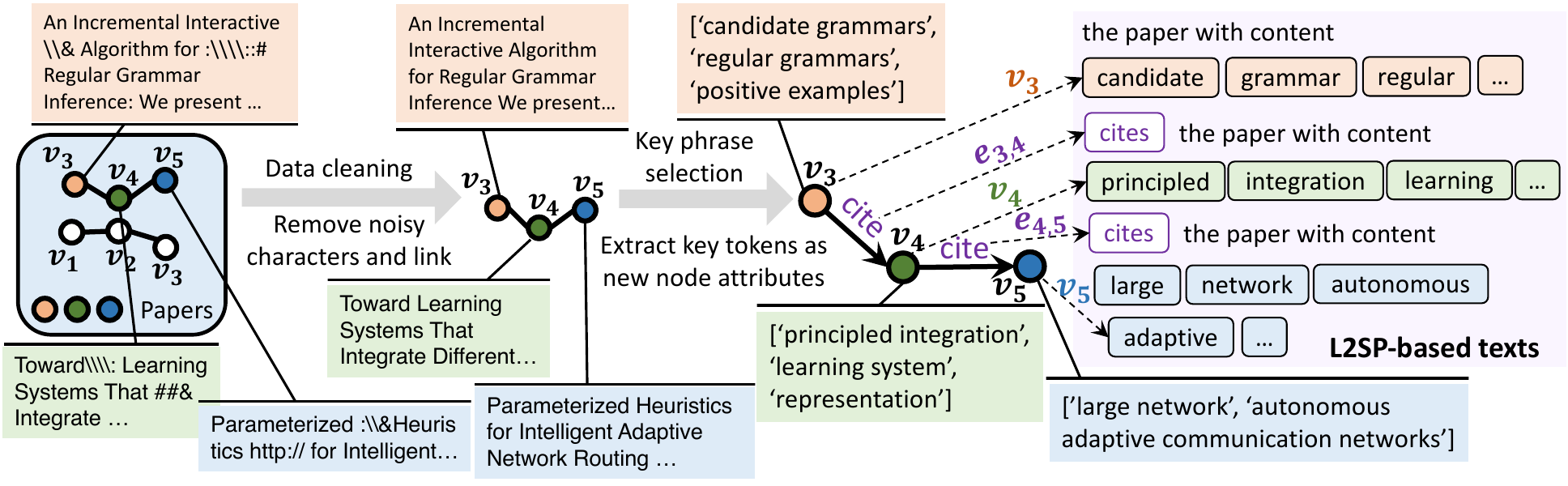}
  \caption{
  Path textualization of homogeneous graphs on citation networks.}
  \label{fig:textual func}
\end{figure*}

\subsection{Comparing Path Selections in Phase-I}
We compare different choices of path selection and analyze their~pros and cons, including 1) random walk v.s. shortest paths, 2) long paths v.s. short paths,  and  3) random short paths v.s. \ltos-based paths. 
\label{text-with-properties}

\stitle{Random walk vs. Shortest paths.
} 
Previous work~\cite{walklm} applies random walk to sample paths, while it is less effective than our \ltos method in LLM due to three major reasons, \emph{noisy and fragile}, \emph{mismatching with LLM}, and \emph{hardly covering bridge edges}.


First, 
{random walk can easily involve noisy nodes and cycles,} rendering them less robust. 
In contrast, the shortest path focuses on direct connections, \emph{avoiding unnecessary detours through noisy nodes and cycles.}
Thus, our methods demonstrate greater resilience to noisy nodes and other disruptions.

Second, random patterns may not match the order of language sequences in LLM. \emph{LLM may have limited ability to learn graph features from irregular random walks.} Most LLMs use causal language modeling (CLM) training techniques. LLMs generate tokens one by one according to the linguistic rules. The next token prediction is only based on the prefix tokens, not the following ones. 
Considering the prefix nodes in random walks, the next node is essentially random and unpredictable. Conversely, \emph{the next node selection in the shortest path is also based on the prefix nodes}, making it more in line with the generation pattern of LLMs.

Last, 
{random walk may hardly cover bridge edges,} i.e., the edges across distinct dense groups. Consider a bridge formed by $w_1-w_2-w_3$ 
to connect two dense groups $G_1$ and $G_2$ shown in Figure~\ref{fig:long to short}.
For a random source node $s$ in a dense group, the generated random walk paths tend to stay within its dense groups, which misses the bridge $w_1-w_2-w_3$ as shown in Figure~\ref{fig:long to short}(a). 
On the other hand, \ltos-based shortest paths are converted from a long shortest path, where the source and target nodes that span a long distance may have a high probability of appearing in different dense groups, as shown in Figure~\ref{fig:long to short}(c). This easily enables to cover the bridge between dense groups by our \ltos-based path selection.

\stitle{Long paths vs. Short paths.
} 
We compare two kinds of shortest paths in terms of different lengths, called long paths and short paths, respectively. Given a long path $\mathcal{P}=\langle v_1,v_2,...,v_\ell\rangle$, the distance between the source node $v_1$ and the target node $v_\ell$ is $\ell$. LLM tends to learn from all the prefix nodes $\langle v_1,...,v_{\ell-1}\rangle$ and generate the next node $v_\ell$. A pair of nodes $(v_1, v_\ell)$ more than $\ell$ hops apart in a graph has a very weak relationship. However, 
the node embedding of $v_\ell$ still may be affected by \emph{the weakly associated node} $v_1$ because LLM generates the next token based on all forward nodes. Therefore, compared to long paths, we chose short shortest paths instead to avoid such irrelevant propagation.

\stitle{Random short paths vs. L2SP-based paths.
}
Similar to random walk, most randomly sampled shortest paths tend to fall within dense groups in Figure~\ref{fig:long to short}(b).
It is also challenging to cover bridge edges with random shortest paths. Differently, our \ltos method could cover not only the paths in dense groups but also the interconnections between diverse dense groups. Overall, \emph{L2SP-based shortest paths have more representative abilities as graph features.}

\subsection{Phase-II: Path Textualization of L2SP} 
To facilitate the Path-LLM's learning of our \ltos structural and semantic features, we utilize a path textual function to represent structural paths into textual sequences, denoted as $\Phi(\cdot)$.
Path textual function $\Phi(\cdot)$ can automatically process and concatenate node text attributes within the path $\mathcal{P}_{short}$ in designed ways to form the text sequence $T$, which can be formulated as $T=\Phi(\mathcal{P}_{short})$. $\Phi(\cdot)$ has different processing ways for two cases: \emph{homogeneous} and \emph{heterogeneous} text-attributed graphs, as shown in Figure~\ref{fig:textual func} and \ref{fig:textual func2}. 

For homogeneous text-attributed graphs, we first conduct data cleaning for node text attributes, removing noisy characters and links. 
Excessively long text attributes and overly complex sentence structures (e.g., abstracts) can weaken the path structure properties in \ltos-based texts. This complexity makes it more difficult for the following Path-LLM to learn the path structure effectively. To address it, we utilize the Positionrank algorithm~\cite{florescu2017positionrank} to extract key phrases from the text attributes as new node attributes $\mathcal{X}'_v$. This simplification retains the primary semantic information while making path structures more accessible for Path-LLM.
We design the \textit{$\langle\text{paper with content...}\rangle$} template for citation networks to assist Path-LLM in distinguishing the text attributes of different nodes.
$\Phi(\cdot)$ integrates templates and different node and edge text attributes into textual paths, as shown in Figure~\ref{fig:textual func}. Formally, $\Phi(\mathcal{P}_{short})=template(
\langle\mathcal{X}'_v|\mathcal{X}_{e}\rangle)$, where $v,e\in \mathcal{P}_{short}$. 
For heterogeneous text-attributed graphs, data cleaning and key phrase selection are also applicable. Given a path $\mathcal{P}_{short}=\langle v_1, v_2,...,v_\ell\rangle$, $\Phi(\cdot)$ concatenates the text attributes of each node on the path as $\Phi(\mathcal{P}_{short})=\langle\mathcal{X}'_{v_1}| \mathcal{X}_{e_{1,2}} |\mathcal{X}'_{v_2}|...\rangle$ to derive \ltos-based texts, as shown in Figure~\ref{fig:textual func2}. 
Thus, we obtain the \ltos-based text by $T=\Phi(\mathcal{P}_{short})$, where $T=\{t_1,...,t_S\}$ and the length is $|T|=S$.

\stitle{Discussion}. Different from WalkLM~\cite{walklm}, our proposed path textual function $\Phi(\cdot)$ 
can handle homogeneous graphs with abundant texts, as shown in Figure~\ref{fig:textual func}. 
For nodes with excessively long text attributes, we conduct data cleaning and key phrase selection, selecting essential tokens as new text attributes of nodes. Moreover, for heterogeneous graphs, WalkLM employs a complicated template unsuitable for LLM path structure learning. 

\begin{figure}[t]
\centering
  \includegraphics[width=0.37\textwidth,height=0.21\textwidth]{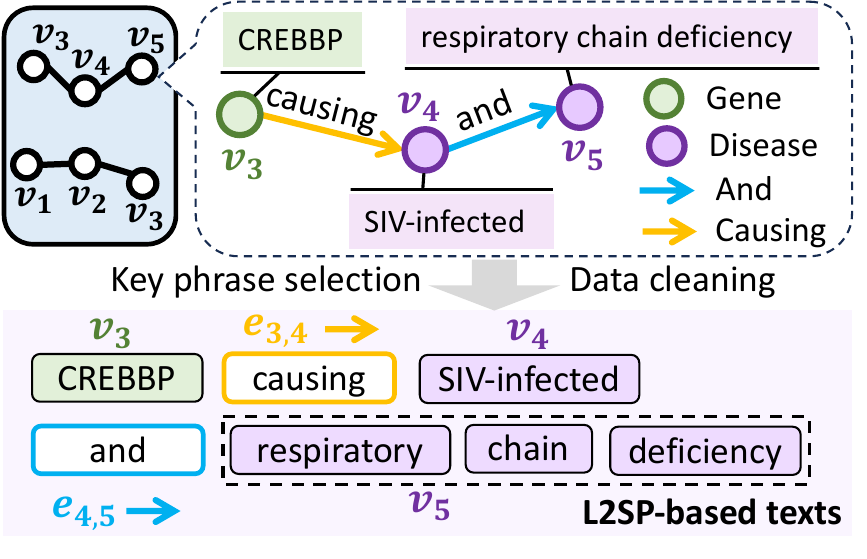}
  \caption{Path textualization of heterogeneous graphs on biomedical networks with four types of nodes: genes, diseases, chemicals and species~\cite{pubmed2}.}
  \label{fig:textual func2}
\end{figure}

\begin{figure*}[t]
    \centering
\includegraphics[width=0.8\textwidth, height=0.26\textwidth]{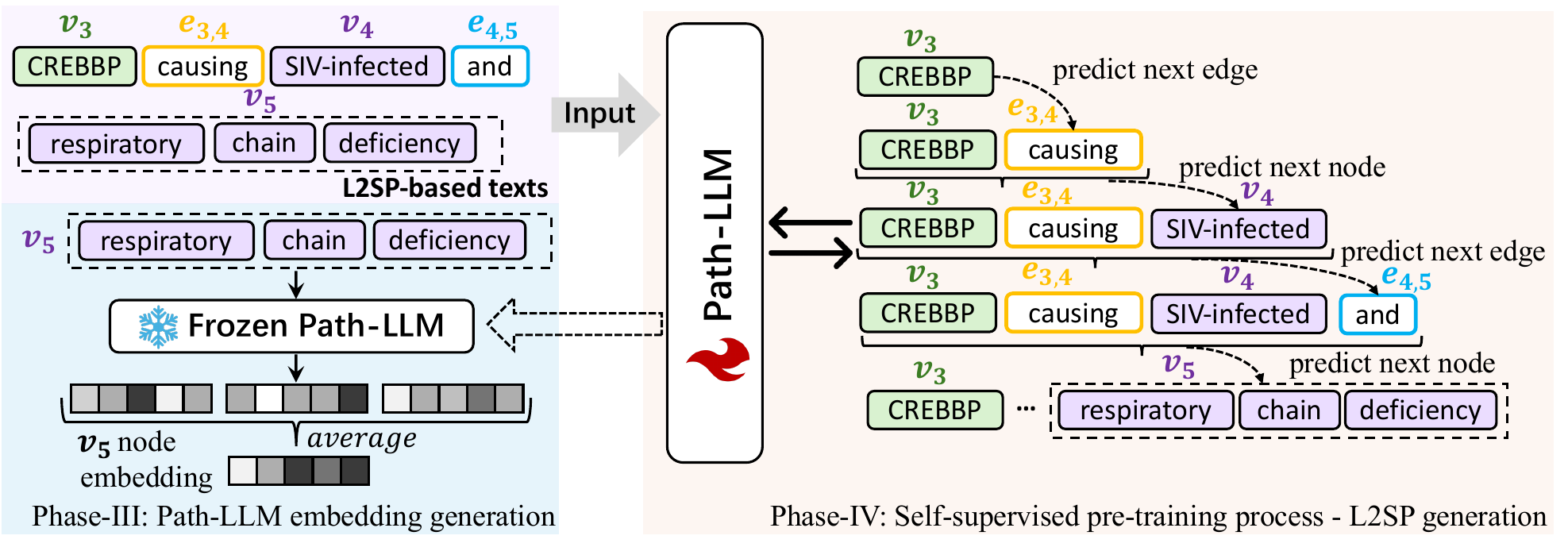}
  \caption{Path-LLM self-supervised pre-training and embedding generation.}
  \label{fig:training}
\end{figure*}

\subsection{Complexity Analysis} 
We analyze the time complexities of L2SP selection in Phase-I and path textualization in Phase-II, respectively. For a graph $G(V, E)$, the size of nodes and edges are $n =|V|$ and $m=|E|$, respectively. W.L.O.G., we assume that $n-1\leq m$ for a connected graph $G$, i.e., $O(n) \subseteq O(m)$.

\begin{theorem}
\label{theorem: l2sp}
    Algorithm~\ref{algo:l2sp} of L2SP selection in Phase-I takes $O(bm)$ time and $O(m+bL_{max})$ space, where $b$ is the number of sampled nodes and $L_{max}$ is the maximum length of paths.  
\end{theorem}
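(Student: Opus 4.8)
The plan is to bound the two phases of Algorithm~\ref{algo:l2sp} separately --- the $b$ source-node iterations (lines 2--5) and the path-cutting step (lines 6--8) --- and then combine. First I would fix a single source node $s_i \in \mathcal{S}$ and account for one iteration of the outer \textbf{for} loop. Line~3 runs a full-graph BFS from $s_i$, which on a connected graph visits every vertex and edge once and hence costs $O(n+m) = O(m)$ time (using $O(n) \subseteq O(m)$ from the stated assumption $n-1 \le m$); the candidate set $\mathcal{T}_i$ of nodes at distance $\ge L$ is just a filtered output of this BFS, so it adds nothing asymptotically. Line~4 (pick a random target $\tau_i \in \mathcal{T}_i$) is $O(|\mathcal{T}_i|) = O(n) \subseteq O(m)$. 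Line~5 runs another BFS from $s_i$, this time recording, for each vertex, its set of shortest-path predecessors toward $s_i$; building this shortest-path DAG and then extracting $k$ distinct shortest $s_i$--$\tau_i$ paths from it, each of length at most $L_{\max}$, costs $O(m + kL_{\max})$. Treating $k$ as a constant parameter (or folding $kL_{\max}$ into $O(m)$ when $kL_{\max} = O(m)$, which the paper implicitly assumes since paths are "long" but the graph is large), one iteration is $O(m)$, and summing over the $b$ sources gives $O(bm)$ total time for lines 1--5.

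Next I would handle the inner cutting loop (lines 6--8). For each of the $k$ sampled long paths, the comprehension that produces $\mathcal{P}_{short}$ simply walks along the path once, emitting consecutive overlapping windows of length $\ell$; this is linear in the path length, i.e.\ $O(L_{\max})$ per long path, hence $O(kL_{\max})$ per source and $O(bkL_{\max})$ overall. Since this is dominated by (or comparable to) the $O(bm)$ BFS cost under the same $kL_{\max} = O(m)$ convention, the overall time is $O(bm)$, matching the claim.

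For the space bound I would argue as follows. Storing the graph $G$ itself is $O(n+m) = O(m)$. A single BFS needs $O(n) \subseteq O(m)$ auxiliary space for the visited array, queue, and distance/predecessor labels; crucially, these are reused across the $b$ iterations, so they do not accumulate. What does accumulate is the output: for each of the $b$ sources we store $k$ long paths (later cut into short ones), each of length at most $L_{\max}$, contributing $O(bkL_{\max})$; absorbing the constant $k$ this is $O(bL_{\max})$, giving total space $O(m + bL_{\max})$ as stated. The short-path set $\mathbb{P}_{short}$ has total size within a constant factor of the long-path set (overlapping windows at most double the node count), so it does not change the bound.

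The main obstacle --- really a matter of careful bookkeeping rather than difficulty --- is justifying that the two BFS-derived subroutines in lines 3 and 5 are genuinely $O(m)$: line~3's filtering to get $\mathcal{T}_i$ is immediate, but line~5's ``select $k$ shortest paths'' must be read as extracting paths from the shortest-path DAG produced by one BFS (cost $O(m + kL_{\max})$), not as $k$ independent searches, and the write-up should make explicit the convention under which $k$ (and $kL_{\max}$) are treated as subsumed by the parameters already named, so that the clean $O(bm)$ and $O(m + bL_{\max})$ forms hold. I would state this convention once at the start of the proof and then the rest is routine summation.
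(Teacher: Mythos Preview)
Your proposal is correct and follows essentially the same approach as the paper's proof: bound each of the $b$ iterations by the cost of one or two full BFS traversals ($O(m)$ each), bound the cutting step linearly in the total path length, and for space combine the graph/BFS storage with the accumulated $O(bL_{\max})$ of stored paths. Your write-up is in fact more careful than the paper's in making explicit the convention that $k$ is absorbed as a constant (equivalently, that $kL_{\max} = O(m)$), which the paper leaves implicit.
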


\begin{proof}
First, in the path selection process, we randomly select $b$ nodes. For each sampled node, 
we conduct BFS on $G$, 
taking $O(b(n + m)) \subseteq O(bm)$ time. It then randomly selects one node in a candidate set $\mathcal{T}$, and then conducts BFS again between a source node and a target node, which is $O(n+m) \subseteq O(m) $. Following this, we convert long shortest paths into \ltos‐based shortest paths 
using $O(b\Sigma|\mathcal{P}_{short}|) \subseteq O(bm)$ time, where $\mathcal{P}_{short} \in \mathbb{P}_{short}$.
Therefore, 
the overall time complexity of 
Algorithm~\ref{algo:l2sp}
is $O(bm)$. 

Next, we analyze the space complexity. Algorithm~\ref{algo:l2sp} takes $O(m+n)\subseteq O(m)$ to store $G$ and apply BFS.
It then takes $O(bL_{max})$ space to save the long shortest paths, where $L_{max}$ is the maximum length of the long shortest 
 paths. Thus, the space complexity of Algorithm~\ref{algo:l2sp} is $O(m+bL_{max})$.
\end{proof}

\begin{theorem}
\label{theorem:textualization}
Path textualization in Phase-II takes $O(\sum_{v\in\mathbb{P}} Pos(At\text{-}$ $tr(v)))$ time, where $Pos(\cdot)$ denotes the time complexity of PositionRank algorithm~\cite{florescu2017positionrank} and $Attr(v)$ denotes the number of characters in the text-attribute of $v$.
\end{theorem}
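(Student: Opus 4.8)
The plan is to account for the cost of path textualization by summing, over all nodes that appear in the selected short paths, the work done to (i) clean the raw text attribute, (ii) run PositionRank to extract key phrases, and (iii) assemble the textual sequence via the template-based function $\Phi(\cdot)$. First I would observe that data cleaning a node's text attribute (removing noisy characters and links) is a single linear scan, costing $O(Attr(v))$ per node $v$, where $Attr(v)$ is the number of characters in $v$'s text attribute. Next, the dominant step per node is the PositionRank invocation on $v$'s (cleaned) attribute text, which by definition costs $Pos(Attr(v))$; since cleaning only shrinks the text, this bound is unaffected. Finally, concatenating the extracted key phrases, edge attributes, and fixed-size template tokens into the $\ltos$-based text $T=\Phi(\mathcal{P}_{short})$ is again linear in the number of retained characters, hence dominated by $O(Attr(v))$ summed over the path's nodes (edge attributes in the benchmark graphs are short/fixed-size, or can be absorbed analogously).

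Summing these contributions over every node occurrence in $\mathbb{P}_{short}$ gives a total of $O\big(\sum_{v\in\mathbb{P}} (Attr(v) + Pos(Attr(v)))\big)$, and since PositionRank's running time is at least linear in its input length, $Attr(v) = O(Pos(Attr(v)))$, so the $Attr(v)$ terms are absorbed and the bound collapses to $O\big(\sum_{v\in\mathbb{P}} Pos(Attr(v))\big)$, matching the statement. (Here I read $\mathbb{P}$ as ranging over node occurrences in $\mathbb{P}_{short}$; a node appearing in several short paths — e.g.\ a cutting node — is counted once per occurrence, which only inflates the constant.)

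The main subtlety I would be careful about is what exactly is being charged to $\mathbb{P}$: whether it is distinct nodes across all of $\mathbb{P}_{short}$ or node--path incidences. Since the short paths share cutting nodes, a node can recur $O(b k)$ times in the worst case, so the honest bound counts incidences; but because PositionRank can in principle be memoized per distinct node, one could also state a sharper $O\big(\sum_{v\in V(\mathbb{P}_{short})} Pos(Attr(v))\big)$. I would state the incidence version to match the theorem as written, and remark that the key-phrase extraction can be precomputed once per node if desired. The only other thing to pin down is that the template insertion adds at most a constant number of tokens per node, so it never dominates — this is immediate from the fixed templates in Figures~\ref{fig:textual func} and \ref{fig:textual func2}.
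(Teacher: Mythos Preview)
Your proposal is correct and follows essentially the same approach as the paper: charge a linear $O(Attr(v))$ cost for cleaning plus $Pos(\cdot)$ for key-phrase extraction per node, then absorb the linear term into $Pos(Attr(v))$ using the fact that PositionRank is at least linear in its input. The paper's proof is slightly terser (it routes through a $Token(v)\leq Attr(v)$ worst-case bound rather than your ``cleaning only shrinks the text'' observation, and it does not separately discuss the concatenation step or the node-incidence subtlety you raise), but the structure and conclusion are the same.
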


\begin{proof}
In the path textualization phase, we first clean the text attributes of each node $v\in \mathbb{P}_{short}$, leading to a time complexity of $O(\sum_{v\in \mathbb{P}} Attr(v))$, where $Attr(v)$ denotes the number of characters in the text attribute of $v$. For key phrase selection, we employ the PositionRank algorithm~\cite{florescu2017positionrank} to extract key phrases from each cleaned text attribute, incurring a complexity of $O(\sum_{v\in \mathbb{P}} Pos(Token(v)))$, where $Token(v)$ denotes the token number of cleaned text attributes and $Pos(Token(v)))$ denotes the time complexity of PositionRank for one node $v$. 
In the worst case, every token only contains one character, i.e., $O(Token(v))\subseteq O(Attr(v))$. Thus, we can obtain that $O(\sum_{v\in \mathbb{P}} Pos(Token(v))) \subseteq O(\sum_{v\in\mathbb{P}} Pos(Attr(v)))$. Furthermore, based on the PositionRank algorithm, $O(\sum_{v\in \mathbb{P}} Attr(v)) \subseteq O(\sum_{v\in \mathbb{P}} Pos(Attr(v)))$. 
Therefore, the overall time complexity of path textualization is $O(\sum_{v\in\mathbb{P}} Pos(Attr(v)))$.
\end{proof}




\section{Path-LLM Embedding Learning from \ltos based Graph Features}

\label{sec:training and embedding}
This section introduces the Path-LLM self-supervised learning process, designed to learn both inter-group connections and intra-group structures from L2SP-based texts. The trained Path-LLM derives graph embeddings based on node text attributes and learned graph structure features, which are then applied to subsequent downstream tasks (node classification, edge validation, keyword search). The detailed whole process is shown in Figure~\ref{fig:training}.
\subsection{Phase-III: \ltos-based graph feature learning}
{Based on the powerful semantic representation capabilities of LLM, Path-LLM integrates essential graph features inherent in \ltos into its own deep semantic space.} To achieve this, Path-LLM learns to generate \ltos-based texts constructed above through the self-supervised pre-training process. 
{Specifically, we transform the order of nodes in the \ltos to the order of text tokens. As Path-LLM learns the order of tokens, it also learns the order of nodes and edges within the \ltos.}
Typically, the pre-training task for LLM is to generate the next token based on prefix tokens~\cite{gpt3}, similar to Eq.~\ref{llmprob}. 
In this paper, we align the next token generation with the next node or edge generation.
When Path-LLM generates \ltos-based texts, the pre-training task for Path-LLM is to generate the next node $v_i$ or edge $e_{i,j}$ on the corresponding \ltos based on prefix nodes and edges $\{e_{<i}, v_{<i}\}$, as shown in Figure~\ref{fig:training}. This process is then iteratively repeated until the whole \ltos is generated.

Formally, the \ltos-based text can be denoted as $T=\Phi(\mathcal{P}_{short})=\langle...|\mathcal{X}'_{v_i}|\mathcal{X}_{e_{i,j}}|\mathcal{X}'_{v_j}|...\rangle$, 
where $1\leq i,j\leq \ell$. $\mathcal{X}'_{v_i}$ is the processed new text attribute of the node $v_i$, $\mathcal{X}_{e_{i,j}}$ is the text attribute of edge $e_{i,j}$
and $\ell$ denotes the length of $\mathcal{P}_{short}$.  
Based on Eq.~\ref{llmprob}, the probability of Path-LLM predicting the next node $v_i$ can be formulated as $p(\mathcal{X}'_{v_i}|\langle\mathcal{X}'_{v_{<i}},\mathcal{X}_{e_{<i}}\rangle)$ and the probability of predicting the next edge $e_{i,j}$ is $p(\mathcal{X}_{e_{i,j}}|\langle\mathcal{X}'_{v_{\leq i}},\mathcal{X}_{e_{<i}}\rangle)$.
$\langle\mathcal{X}'_{v_{<i}},\mathcal{X}_{e_{<i}}\rangle$ symbolizes the token sequence composed of the nodes and edges preceding $v_i$.
Thus, the probability of generating the whole shortest path $\mathcal{P}_{short}$ is 
\begin{equation}
\label{eq:sp}
     \prod\limits_{i=1}^\ell p(\mathcal{X}'_{v_i}|\langle\mathcal{X}'_{v_{<i}},\mathcal{X}_{e_{<i}}\rangle)p(\mathcal{X}_{e_{i,i+1}}|\langle \mathcal{X}'_{v_{\leq i}},\mathcal{X}_{e_{<i}}\rangle),
\end{equation}
where $p(\mathcal{X}_{e_{i,i+1}}|\langle\mathcal{X}'_{v_{\leq i}},\mathcal{X}_{e_{<i}}\rangle)=1$ if $i=\ell$. 
The probability of generating $\mathcal{P}_{short}$ is consistent with the probability of generating tokens, which is $p(t) = \prod\limits_{i=1}^S p(t_i|t_{<i})$, as proved in Theorem~\ref{theorem-sp}.
Thus, Path-LLM learns the path structure through self-supervised learning, thereby understanding the graph structure.
Consequently, when Path-LLM generates unified graph embeddings, it can integrate the learned graph structure with its inherent deep semantic representation capabilities.

We use the cross-entropy loss function~\cite{zhang2018generalized} during the self-supervised training process of Path-LLM. The \ltos-based text can also be denoted as $T=\{ t_1,...t_S\}$, where $S$ denotes the length of $T$. Path-LLM starts generating from the first token $t_1$ and calculates the loss for generating the next token $t_j$ based on prefix tokens $t_{<j}$. Therefore, the loss for generating one \ltos-based text is:
\begin{equation}
\label{eq:loss}
   \mathcal{L}_\Theta= -\frac{1}{S}\sum\limits_{j=1}^{S} \sum\limits_{k=1}^{|\mathcal{D}|} y_k\log \frac{\exp(P_{t_k^*}(t_j|t_{<j}))}{\sum_{t_k^*\in \mathcal{D}} \exp(P_{t_k^*}(t_j|t_{<j}))}, 
\end{equation}
where $\Theta$ is the learnable parameters of Path-LLM, $j$ is the position in $T$ and $k$ is the position in the LLM vocabulary $\mathcal{D}$. $P_{t_k^*}$ represents the probability that the $k$-th token $t_k^*$ in $\mathcal{D}$ is the next token. $|\mathcal{D}|$ denotes the size of the LLM vocabulary.
$y_k$ is the label of whether $t_k^*$ in $\mathcal{D}$ is the token at position $j$.
For the whole dataset composed of $N$ \ltos-based texts, the final loss function is:
 $$\mathcal{L}_{pretrain}=  \frac{1}{N}\sum\limits_{N} \mathcal{L}_\Theta,$$   

\stitle{Rationales of Pre-training in Phase-III.}
During the Path-LLM pre-training process, \ltos-based text generation can be regarded as \ltos generation. 
Formally, 
tokens of node and edge attributes can be denoted as $\{t_j^{v_i}\}$ and $\{t_j^{e_{i,i+1}}\}$ separately,
where $1\leq j\leq S$ and $1\leq i\leq \ell$. $j$ represents the token index in $T$, while $i$ represents the node index in the corresponding path. $S$ is the length of the text, and $\ell$ is the length of the corresponding \ltos. 
Tokens associated to node $v_i$ can be denoted as $\{t_{j^*\leq j\leq \Tilde{j}}^{v_i}\}$, where $t_{j^*}^{v_i}$ is the first token in $\mathcal{X}'_{v_i}$ and $t_{\Tilde{j}}^{v_i}$ is the last token in $\mathcal{X}'_{v_i}$. The probability of Path-LLM generating tokens of node $v_i$~\cite{yadkori2024believe}\cite{requeima2024llm} is $\prod\limits_{j=j^*}^{\Tilde{j}} p(t_{j}^{v_i}|t_{<j}^{v_{\leq i}},t_{<j}^{e_{<i}})$, 
where $t_{<j}^{v_{\leq i}}$ represents a token list satisfying the token index being less than $j$ and the node index being less than or equal to $i$, while $t_{<j}^{e_{<i}}$ represents the tokens associated with the edges preceding $v_i$. Similarly, tokens associated to edge $e_{i,i+1}$ can be denoted as $\{t_{j'\leq j\leq j''}^{e_{i,i+1}}\}$ and the probability of generating tokens of edge $e_{i,i+1}$ is $\prod\limits_{j=j'}^{j''} p(t_{j}^{e_{i,i+1}}|t_{<j}^{v_{\leq i}},t_{<j}^{e_{<i}})$. 
Then, we can formulate the probability of generating the \ltos-based text $T$ as $\prod\limits_{i=1}^{\ell}\left[ \prod\limits_{j=j^*}^{\Tilde{j}} p(t_{j}^{v_i}|t_{<j}^{v_{\leq i}},t_{<j}^{e_{<i}}) \prod\limits_{j=j'}^{j''} p(t_{j}^{e_{i,i+1}}|t_{<j}^{v_{\leq i}},t_{<j}^{e_{<i}})\right]$, which equals $p(t) = \prod\limits_{i=1}^S p(t_i|t_{<i})$. 
While, the probability of the corresponding \ltos generation can be formulated as Eq.~\ref{eq:sp}.

\begin{lemma}
\label{lemma-node}
    During the Path-LLM self-supervised pre-training process, the generation of tokens associated with nodes in \ltos-based texts can be regarded as the generation of nodes in \ltoss. Formally, $$ \prod\limits_{j=j^*}^{\Tilde{j}} p(t_{j}^{v_i}|t_{<j}^{v_{\leq i}},t_{<j}^{e_{<i}}) = p(\mathcal{X}'_{v_i}|\langle\mathcal{X}'_{v_{<i}},\mathcal{X}_{e_{<i}}\rangle).$$
\end{lemma}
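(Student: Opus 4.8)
The plan is to establish the claimed identity by unwinding both sides into products over the individual token positions that belong to node $v_i$ and checking that the conditioning sets match term-by-term. First I would observe that, by definition of the path textualization $\Phi(\cdot)$, the quantity $\mathcal{X}'_{v_i}$ is itself a contiguous block of tokens $\{t_{j^*}^{v_i}, \dots, t_{\Tilde{j}}^{v_i}\}$ inside the text $T$, so the LLM-level conditional probability $p(\mathcal{X}'_{v_i}\mid \langle \mathcal{X}'_{v_{<i}}, \mathcal{X}_{e_{<i}}\rangle)$ is, by the chain rule of Eq.~\ref{llmprob} applied to this block, exactly $\prod_{j=j^*}^{\Tilde{j}} p(t_j^{v_i}\mid t_{<j})$, where $t_{<j}$ is the full prefix of all tokens before position $j$ in $T$. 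Thus the left-hand side of the lemma and this rewriting of the right-hand side have the same factors; what remains is to verify that the two ways of writing the prefix agree.

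Next I would decompose the prefix $t_{<j}$ for a position $j$ with $j^* \le j \le \Tilde{j}$. Since the tokens of $T$ are laid out in path order — nodes and edges interleaved as $\mathcal{X}'_{v_1}\mid\mathcal{X}_{e_{1,2}}\mid\mathcal{X}'_{v_2}\mid\cdots$ — every token before position $j$ either (a) belongs to a node $v_{i'}$ with $i' < i$, or (b) belongs to an edge $e_{i',i'+1}$ with $i' < i$, or (c) belongs to $v_i$ itself with token index $< j$. Cases (a) and (b) together are precisely $\langle \mathcal{X}'_{v_{<i}}, \mathcal{X}_{e_{<i}}\rangle$, and in the notation of the ``Rationales'' paragraph they are captured by $t_{<j}^{v_{\le i}}$ (restricted to node index $<i$) together with $t_{<j}^{e_{<i}}$; case (c) is the partial node block $t_{<j}^{v_i}$, which is subsumed in $t_{<j}^{v_{\le i}}$. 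Hence $t_{<j} = (t_{<j}^{v_{\le i}}, t_{<j}^{e_{<i}})$ as sets of conditioning tokens, which is exactly the prefix appearing on the left-hand side of the lemma. Substituting this equality of prefixes into the factored form from the first step yields the claimed equation.

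I would then note one bookkeeping point that deserves an explicit sentence rather than a calculation: the factor $p(\mathcal{X}'_{v_i}\mid\cdots)$ on the right-hand side of the lemma is meaningful only because in Eq.~\ref{eq:sp} the node attribute $\mathcal{X}'_{v_i}$ is generated immediately after the edge attribute $\mathcal{X}_{e_{i-1,i}}$ and before $\mathcal{X}_{e_{i,i+1}}$, so the ``node generation'' event and the ``token block generation'' event refer to the same contiguous span of $T$; this is where the alignment between the CLM ordering and the L2SP ordering, emphasized throughout Section~\ref{sec:training and embedding}, is actually used.

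The main obstacle I anticipate is not any deep argument but rather notational discipline: making the index ranges $j^* \le j \le \Tilde{j}$, the node-index-versus-token-index distinction ($i$ versus $j$), and the three-way partition of the prefix completely unambiguous, so that ``$t_{<j} = (t_{<j}^{v_{\le i}}, t_{<j}^{e_{<i}})$'' is a genuine set equality and not a sleight of hand. A secondary subtlety is that the tokenizer may split $\mathcal{X}'_{v_i}$ in a way that depends on neighboring text; I would dispose of this by treating $\Phi(\mathcal{P}_{short})$'s tokenization as fixed once $T$ is formed, so that the block boundaries $j^*$ and $\Tilde{j}$ are well-defined for that particular $T$, which is all the lemma needs.
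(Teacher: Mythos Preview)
Your proposal is correct and follows essentially the same route as the paper: both arguments reduce to the chain rule of conditional probability, identifying $\mathcal{X}'_{v_i}$ with the contiguous token block $\{t_{j^*}^{v_i},\dots,t_{\Tilde{j}}^{v_i}\}$ and the conditioning set $\langle\mathcal{X}'_{v_{<i}},\mathcal{X}_{e_{<i}}\rangle$ with the prefix tokens $\{t_{<j^*}^{v_{<i}},t_{<j^*}^{e_{<i}}\}$. The paper's proof is terser (it collapses the product into a single conditional in one line and then matches the two sides by definition), whereas you spell out the prefix decomposition and the tokenization caveat more explicitly, but the underlying argument is the same.
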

\begin{proof}
\label{node-proof}
    Based on the chain rule of conditional probability \cite{chainrule}, we can derive that
    \begin{equation}
        \prod\limits_{j=j^*}^{\Tilde{j}} p(t_{j}^{v_i}|t_{<j}^{v_{\leq i}},t_{<j}^{e_{<i}})= p(\{t_{j^*\leq j\leq \Tilde{j}}^{v_i}\}|t_{< j^*}^{v_{<i}},t_{<j^*}^{e_{<i}}). 
    \end{equation}

    Due to $\mathcal{X}'_{v_i}=\{t_{j^*\leq j\leq \Tilde{j}}^{v_i}\}$ and $\langle\mathcal{X}'_{v_{<i}},\mathcal{X}_{e_{<i}}\rangle=\{t_{< j^*}^{v_{<i}},t_{<j^*}^{e_{<i}}\}$, we can know that:
    $$
        \prod\limits_{j=j^*}^{\Tilde{j}} p(t_{j}^{v_i}|t_{<j}^{v_{\leq i}},t_{<j}^{e_{<i}}) =p(\mathcal{X}'_{v_i}|\langle\mathcal{X}'_{v_{<i}},\mathcal{X}_{e_{<i}}\rangle).
    $$
\end{proof} 

\begin{lemma}
\label{lemma-edge}
    During the Path-LLM self-supervised pre-training process, the generation of tokens associated with edges in \ltos-based texts can be regarded as the generation of edges in \ltoss. Formally, $$\prod\limits_{j=j'}^{j''} p(t_{j}^{e_{i,i+1}}|t_{<j}^{v_{\leq i}},t_{<j}^{e_{<i}}) =p(\mathcal{X}_{e_{i,i+1}}|\langle\mathcal{X}'_{v_{\leq i}},\mathcal{X}_{e_{<i}}\rangle).$$
\end{lemma}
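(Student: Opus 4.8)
The plan is to mirror the proof of Lemma~\ref{lemma-node} almost verbatim, since Lemma~\ref{lemma-edge} is the edge-analogue of the same statement. The only structural difference is that the conditioning context for an edge $e_{i,i+1}$ includes the node $v_i$ itself (hence $v_{\leq i}$ rather than $v_{<i}$ in the subscript), which is already reflected in the statement, so no genuinely new idea is needed.

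First I would recall that the token block associated with the edge $e_{i,i+1}$ is by definition the contiguous slice $\mathcal{X}_{e_{i,i+1}}=\{t_{j'\leq j\leq j''}^{e_{i,i+1}}\}$ inside the $\ltos$-based text $T$, and that this block sits immediately after all tokens of $v_1,\dots,v_i$ and all tokens of $e_{1,2},\dots,e_{i-1,i}$. In particular the prefix of $T$ ending just before position $j'$ is exactly the token sequence $\{t_{<j'}^{v_{\leq i}},\,t_{<j'}^{e_{<i}}\}$, which is precisely the textual realization of $\langle\mathcal{X}'_{v_{\leq i}},\mathcal{X}_{e_{<i}}\rangle$.

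Next I would apply the chain rule of conditional probability~\cite{chainrule} to collapse the product over $j=j',\dots,j''$ into a single joint conditional probability of the whole edge block given its prefix:
\begin{equation}
\prod\limits_{j=j'}^{j''} p(t_{j}^{e_{i,i+1}}|t_{<j}^{v_{\leq i}},t_{<j}^{e_{<i}}) = p(\{t_{j'\leq j\leq j''}^{e_{i,i+1}}\}\mid t_{<j'}^{v_{\leq i}},t_{<j'}^{e_{<i}}).
\end{equation}
Here I would note that for $j$ in the range $[j',j'']$ the extra conditioning tokens $t_{j'\leq \cdot <j}^{e_{i,i+1}}$ are themselves part of the edge block, so the telescoping of the chain rule is legitimate and no token outside the designated prefix-plus-block region enters the conditioning. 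Finally, substituting the identifications $\mathcal{X}_{e_{i,i+1}}=\{t_{j'\leq j\leq j''}^{e_{i,i+1}}\}$ and $\langle\mathcal{X}'_{v_{\leq i}},\mathcal{X}_{e_{<i}}\rangle=\{t_{<j'}^{v_{\leq i}},t_{<j'}^{e_{<i}}\}$ yields $p(\mathcal{X}_{e_{i,i+1}}\mid\langle\mathcal{X}'_{v_{\leq i}},\mathcal{X}_{e_{<i}}\rangle)$, which is the claimed equality.

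The only real subtlety — and the step I would be most careful about — is bookkeeping on the index bounds: one must check that the ``$\leq i$'' on the node superscript and the ``$<i$'' on the edge superscript are exactly the right ranges, i.e. that $v_i$'s tokens have already been emitted before $e_{i,i+1}$'s tokens but $e_{i,i+1}$'s own earlier tokens are the only additional ones in scope. This follows directly from the textualization layout $T=\langle\dots|\mathcal{X}'_{v_i}|\mathcal{X}_{e_{i,i+1}}|\mathcal{X}'_{v_{i+1}}|\dots\rangle$ fixed in Phase-II, so once that layout is invoked the argument is purely mechanical; there is no hard analytic content beyond the chain rule.
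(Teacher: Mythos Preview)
Your proposal is correct and is essentially the same as the paper's own proof, which simply reads ``Similar to the proof of Lemma~\ref{lemma-node}.'' You have faithfully expanded that one-line reference into the chain-rule computation plus the token-block identifications, with the appropriate $v_{\leq i}$ versus $e_{<i}$ bookkeeping, so nothing is missing or different in spirit.
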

\begin{proof}
    Similar to the proof of Lemma~\ref{lemma-node}.
\end{proof}

\begin{theorem}
\label{theorem-sp}
During the Path-LLM self-supervised pre-training process, \ltos-based text generation can be regarded as the \ltos generation.
Formally, 
\begin{equation}
\begin{split}
    &\prod\limits_{i=1}^{\ell}\left[ \prod\limits_{j=j^*}^{\Tilde{j}} p(t_{j}^{v_i}|t_{<j}^{v_{\leq i}},t_{<j}^{e_{<i}}) \prod\limits_{j=j'}^{j''} p(t_{j}^{e_{i,i+1}}|t_{<j}^{v_{\leq i}},t_{<j}^{e_{<i}})\right]\\=&\prod\limits_{i=1}^\ell p(\mathcal{X}'_{v_i}|\langle\mathcal{X}'_{v_{<i}},\mathcal{X}_{e_{<i}}\rangle)p(\mathcal{X}_{e_{i,i+1}}|\langle\mathcal{X}'_{v_{\leq i}},\mathcal{X}_{e_{<i}}\rangle).
\end{split}
\end{equation}

\end{theorem}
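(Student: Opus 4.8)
The plan is to derive Theorem~\ref{theorem-sp} as a direct consequence of Lemma~\ref{lemma-node} and Lemma~\ref{lemma-edge}, since both halves of the product over $i$ have already been matched term-by-term. First I would fix an index $i$ with $1 \leq i \leq \ell$ and invoke Lemma~\ref{lemma-node} to replace the factor $\prod_{j=j^*}^{\Tilde{j}} p(t_{j}^{v_i}|t_{<j}^{v_{\leq i}},t_{<j}^{e_{<i}})$ by $p(\mathcal{X}'_{v_i}|\langle\mathcal{X}'_{v_{<i}},\mathcal{X}_{e_{<i}}\rangle)$, and similarly invoke Lemma~\ref{lemma-edge} to replace $\prod_{j=j'}^{j''} p(t_{j}^{e_{i,i+1}}|t_{<j}^{v_{\leq i}},t_{<j}^{e_{<i}})$ by $p(\mathcal{X}_{e_{i,i+1}}|\langle\mathcal{X}'_{v_{\leq i}},\mathcal{X}_{e_{<i}}\rangle)$. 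Since the left-hand side of the theorem is exactly the product over $i$ of the product of these two factors, substituting both identities inside the bracket for every $i$ and then taking the product over $i = 1, \dots, \ell$ yields the right-hand side verbatim.

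Before substituting, I would want to double-check that the token-index ranges tile the text $T$ correctly: the tokens $\{t_{j^*}^{v_i}, \dots, t_{\Tilde j}^{v_i}\}$ of node $v_i$, followed by the tokens $\{t_{j'}^{e_{i,i+1}}, \dots, t_{j''}^{e_{i,i+1}}\}$ of edge $e_{i,i+1}$, and so on, must partition $\{t_1,\dots,t_S\}$ in the order fixed by $\Phi(\mathcal{P}_{short})$. This is what guarantees that the nested product $\prod_i [\,\cdots\,]$ on the left equals the flat product $p(t) = \prod_{i=1}^S p(t_i\mid t_{<i})$, i.e. that no conditioning context is dropped or duplicated when we regroup consecutive tokens into node-blocks and edge-blocks. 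The argument here is the same telescoping / chain-rule regrouping already used inside the proof of Lemma~\ref{lemma-node}, just applied one level up: grouping the per-token factors of a contiguous block collapses to one conditional probability of the block given its (shared) prefix.

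I would then write the short chain of equalities: start from $\prod_{i=1}^{\ell}\bigl[\prod_{j=j^*}^{\Tilde{j}} p(t_{j}^{v_i}\mid t_{<j}^{v_{\leq i}},t_{<j}^{e_{<i}}) \prod_{j=j'}^{j''} p(t_{j}^{e_{i,i+1}}\mid t_{<j}^{v_{\leq i}},t_{<j}^{e_{<i}})\bigr]$, apply Lemma~\ref{lemma-node} and Lemma~\ref{lemma-edge} factor-by-factor inside the bracket, and conclude $= \prod_{i=1}^\ell p(\mathcal{X}'_{v_i}\mid\langle\mathcal{X}'_{v_{<i}},\mathcal{X}_{e_{<i}}\rangle)\,p(\mathcal{X}_{e_{i,i+1}}\mid\langle\mathcal{X}'_{v_{\leq i}},\mathcal{X}_{e_{<i}}\rangle)$, which is exactly Eq.~\ref{eq:sp}, the probability of generating $\mathcal{P}_{short}$. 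Optionally I would remark that this common value also equals $p(t)=\prod_{i=1}^S p(t_i\mid t_{<i})$, closing the loop with the discussion preceding the theorem.

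The main obstacle is not any deep calculation — the heavy lifting is already encapsulated in the two lemmas — but rather the bookkeeping of the conditioning sets: making sure that $t_{<j}^{v_{\leq i}}, t_{<j}^{e_{<i}}$ at the start of node-block $i$ coincides with $\langle \mathcal{X}'_{v_{<i}}, \mathcal{X}_{e_{<i}}\rangle$, and that the prefix at the start of edge-block $i$ coincides with $\langle \mathcal{X}'_{v_{\leq i}}, \mathcal{X}_{e_{<i}}\rangle$ (note the subtle $\leq i$ versus $<i$ distinction, since the edge $e_{i,i+1}$ is generated \emph{after} node $v_i$ but \emph{before} node $v_{i+1}$). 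Once the block boundaries and prefixes are lined up consistently, the theorem follows immediately by substitution, so I would keep the written proof to a few lines, citing Lemma~\ref{lemma-node} and Lemma~\ref{lemma-edge} and the chain rule of conditional probability~\cite{chainrule}.
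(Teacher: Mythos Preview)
Your proposal is correct and takes essentially the same approach as the paper: the paper's own proof is the single sentence ``By Lemmas~\ref{lemma-node} and \ref{lemma-edge}, we can prove Theorem~\ref{theorem-sp},'' and your plan simply spells out that substitution together with the bookkeeping on conditioning prefixes that the paper leaves implicit.
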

\begin{proof}
    By Lemmas~\ref{lemma-node} and \ref{lemma-edge}, we can prove Theorem~\ref{theorem-sp}.
\end{proof}

\subsection{Phase-IV: Path-LLM Embedding Generation}

The node text attribute $\mathcal{X}_{v_i}$ can be a few words, a sentence, or a paragraph, all of which can be transformed into Path-LLM embeddings. Specifically, after cleaning $\mathcal{X}_{v_i}$ and selecting key phrases as new text attributes $\mathcal{X}'_{v_i}$, we input $\mathcal{X}'_{v_i}$ into the frozen Path-LLM and then extract all token embeddings $\{\varepsilon_i\}_{i=1}^{r}$ associated with the same node $v_i$, where $r$ is the number of token embeddings from the same node. 
We extract embeddings from the last layer of the Path-LLM, 
where 
$\varepsilon_i \in \mathbb{R}^d$ and $d$ is the embedding dimension. 
To obtain the Path-LLM embedding for node $v_i$, the token embeddings derived from the text attribute of $v_i$ are averaged,i.e., for a node $v_i$, the Path-LLM embedding $\xi_{v_i}\in \mathbb{R}^d$ is calculated as $\xi_{v_i} = \frac{1}{r}\sum\limits_{i=1}^{r} \varepsilon_i$.
The embedding $\xi_{v_i}$ integrates both the graph structure learned by Path-LLM and its inherent semantic understanding capabilities.

\begin{table}[t]
\small
   \centering
   \renewcommand\arraystretch{1.2}
   \setlength{\tabcolsep}{4pt}
   \caption{Complexity analysis of WalkLM and Path-LLM in two stages: fine-tuning and embedding deriving. $B_{LM}(\cdot)$ denotes the time complexity taken by a LM in WalkLM. $|N|$ denotes the number of random walks. $l_{avg}$ denotes their average length. Path-LLM significantly faster than WalkLM in two reasons: $|\mathbb{P}_{short}|<<|N|$ and $\hat{S}<<l_{avg}$.}
   \label{tab:walklmsetting}
   \begin{tabular}{l|c|c}\hline
   Method  &   Fine-tuning &  Embedding-deriving \\
\hline
    WalkLM &  $O(B_{LM}(I,|N|, l_{avg}, d)) $ & $O(B_{LM}(n,l_{avg},d))$\\  \hline
    Path-LLM &  $O(B_{LLM}(I,|\mathbb{P}_{short}|,\hat{S},d))$ & $O(B_{LLM}(n,\hat{S},d))$\\

  \hline
    \end{tabular}
\end{table}

\stitle{Complexity Analysis}.{
We analyze the time complexity of Phase-III and Phase-IV. 
Due to the complex structure of the neural network-based learning model, we assume that a commonly used LLM is a black-box model taking a function of time complexity, denoted by $B_{LLM}(\cdot)$. 
Thus, our Path-LLM takes $O(B_{LLM}(I,|\mathbb{P}_{short}|,\hat{S},d))$, where $I$ is the number of iterations for Path-LLM training, $|\mathbb{P}_{short}|$ is the number of training samples, which equals the total number of \ltoss, $\hat{S}$ is the average length of \ltos-based texts and $d$ is the dimension of embedding. 
When parameters $I, |\mathbb{P}_{short}|, \hat{S}, d$ increase, the complexity $O(B_{LLM}(I,|\mathbb{P}_{short}|,\hat{S},d))$ also increases. Notably, due to the attention mechanism of texts generally used in LLMs, $O(B_{LLM}(I,|\mathbb{P}_{short}|,\hat{S},d))$ is positively correlated with $\hat{S}^2$.
For Path-LLM embedding generation, the time complexity of Path-LLM deriving embeddings is related to the node size $|V|=n$, the length of node text-attributes $r$, where $r<\hat{S}$ and the dimension of embedding $d$. After deriving embeddings, the time complexity of the average operation is $O(nrd)$. Therefore, the total time complexity of Phase-III and Phase-IV is $O(B_{LLM}(I,|\mathbb{P}_{short}|,n,\hat{S},d)+nrd)$, where the complexity is positively correlated with all input parameters.}


\subsection{Path-LLM Algorithm} We propose Path-LLM to derive unified graph embeddings from a given text-attributed graph, which is detailed in Algorithm~\ref{algo:framework}. 
We first sample L2SP-based shortest paths in Phase-I (line 1). Then, in Phase II (lines 2-10), we perform path textualization to convert the L2SP-based shortest paths into graph-attributed texts. The resulting graph-attributed texts, referred to as L2SP-based texts, serve as the training corpus for Path-LLM. In Phase III, we train Path-LLM utilizing a self-supervised learning approach (lines 11-19). $\mathcal{B}$ in line 13 denotes one training batch. The number of iterations can be calculated as $I_{max} = \frac{I\cdot |\mathbb{P}_{short}|}{|\mathcal{B}|}$. $|\mathcal{B}|$ in line 17 represents the batch size. We demonstrate in Theorem~\ref{theorem-sp} that generating the subsequent token within L2SP-based texts (line 15) can also be interpreted as generating the next node or edge in L2SP-based shortest paths. Finally, in Phase IV (lines 20-23), Path-LLM extracts unified graph embeddings, with $f(\mathcal{X}'_v)$ denoting the embeddings produced by inputting $\mathcal{X}'_v$ into the Path-LLM function $f(\cdot)$.

\begin{theorem}
\label{theorem:framework}
{Path-LLM in Algorithm~\ref{algo:framework} takes $O(bm+\sum_{v\in\mathbb{P}} Pos(At\text{-}$ $tr(v))+B_{LLM}(I,|\mathbb{P}_{short}|,n,\hat{S},d)+nrd)$ time.}
\end{theorem}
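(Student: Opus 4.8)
The plan is to prove Theorem~\ref{theorem:framework} by simply summing the time complexities of the four phases of Algorithm~\ref{algo:framework}, since the phases run sequentially. First I would invoke Theorem~\ref{theorem: l2sp}, which already establishes that Phase-I (L2SP selection, line 1) runs in $O(bm)$ time. Next I would invoke Theorem~\ref{theorem:textualization}, which gives $O(\sum_{v\in\mathbb{P}} Pos(Attr(v)))$ time for the path textualization in Phase-II (lines 2--10). These two results are stated earlier in the excerpt and may be assumed directly, so the first two summands are immediate.

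For Phase-III (lines 11--19), I would appeal to the complexity analysis already given for the self-supervised pre-training: treating the LLM as a black-box of cost $B_{LLM}(\cdot)$, the training over $I_{max} = I\cdot|\mathbb{P}_{short}|/|\mathcal{B}|$ iterations on $|\mathbb{P}_{short}|$ samples of average textual length $\hat{S}$ and embedding dimension $d$ costs $O(B_{LLM}(I,|\mathbb{P}_{short}|,\hat{S},d))$. For Phase-IV (lines 20--23), the embedding generation feeds each of the $n$ nodes' cleaned attributes (length $r<\hat{S}$) through the frozen model and then averages token embeddings, contributing $O(B_{LLM}(n,\hat{S},d)+nrd)$. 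Combining Phase-III and Phase-IV, and absorbing the $n$ and $\hat{S}$ parameters into a single black-box term as in the preceding discussion, yields $O(B_{LLM}(I,|\mathbb{P}_{short}|,n,\hat{S},d)+nrd)$.

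Finally I would add the four contributions together: $O(bm) + O(\sum_{v\in\mathbb{P}} Pos(Attr(v))) + O(B_{LLM}(I,|\mathbb{P}_{short}|,n,\hat{S},d)+nrd)$, which is exactly the claimed bound $O(bm+\sum_{v\in\mathbb{P}} Pos(Attr(v))+B_{LLM}(I,|\mathbb{P}_{short}|,n,\hat{S},d)+nrd)$. The only mildly delicate points are bookkeeping ones: justifying that the loop overheads in Algorithm~\ref{algo:framework} (iterating over batches, accumulating $\mathbb{P}_{short}$, averaging embeddings) are dominated by the black-box LLM term and the already-accounted terms, and making sure the notation $\mathbb{P}$ versus $\mathbb{P}_{short}$ is used consistently with Theorem~\ref{theorem:textualization}. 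I expect the main (and really only) obstacle to be presentational rather than mathematical — ensuring the black-box cost function $B_{LLM}$ is invoked with a consistent parameter list across phases so the summation telescopes cleanly into the stated form; there is no hard inequality to prove here, just careful aggregation.
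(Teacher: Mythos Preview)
Your proposal is correct and mirrors the paper's own proof: the paper simply states that the bound follows by integrating the complexity results of Theorems~\ref{theorem: l2sp} and~\ref{theorem:textualization} with the Phase-III/Phase-IV analysis in Section~\ref{sec:training and embedding}, which is exactly the sequential summation you describe. Your write-up is actually more detailed than the paper's one-sentence proof, and your observations about bookkeeping (loop overheads, the $\mathbb{P}$ vs.\ $\mathbb{P}_{short}$ notation, and the consistent parameter list for $B_{LLM}$) are valid presentational concerns that the paper does not explicitly address.
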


\begin{proof}
We conclude the complexity by integrating the complexity results by Theorems~\ref{theorem: l2sp}, ~\ref{theorem:textualization}, and the  analysis of Phase-III and Phase-IV in Section~\ref{sec:training and embedding}.
\end{proof}

\begin{algorithm}[t]
\small
  \caption{Path-LLM Framework}
  \label{algo:framework}
  \begin{algorithmic}[1] 
    \Require A text-attributed graph $G=(V, E, \mathcal{X}_v,\mathcal{X}_e)$.
    \Ensure Path-LLM graph embeddings $\xi_v$.
    \State Phase-I: Generate a series of shortest paths $\mathbb{P}_{short}$ on $G$ by the L2SP selection in Algorithm~\ref{algo:l2sp};
    \State Phase-II: Transform $\mathbb{P}_{short}$ into graph-attributed texts
    \For{$\mathcal{P}_{short} \in \mathbb{P}_{short}$}
        \For{$v \in \mathcal{P}_{short}$}
            \State Remove noisy characters and links from $\mathcal{X}_v$;
            \State Select key phrases from $\mathcal{X}_v$ as new $\mathcal{X}'_v$;
        \EndFor
        \If{$G$ is a homogeneous graph}
            \State $T=\Phi(\mathcal{P}_{short})=template(\langle\mathcal{X}'_v|\mathcal{X}_e\rangle)$ by Fig.~\ref{fig:textual func};
        \EndIf
        \If{$G$ is a heterogeneous graph}
            \State $T=\Phi(\mathcal{P}_{short})=\langle...|\mathcal{X}'_v|\mathcal{X}_e|...\rangle$ by Fig.~\ref{fig:textual func2};
        \EndIf
    \EndFor
    \State Phase-III: Path-LLM self-supervised pre-training
    \For{$1\leq iteration \leq I_{max}$}
        \For{$T \in$ $\mathcal{B}$}
            \For{$t \in T$}
                \State Generate next token $t_j \in T$;
            \EndFor
            \State Compute loss $ \mathcal{L}_\Theta$ as Eq.~\ref{eq:loss};
        \EndFor
        \State Compute $ \mathcal{L}_{batch}=\frac{1}{|\mathcal{B}|}\sum\limits_{|\mathcal{B}|}\mathcal{L}_\Theta$;
        \State Differentiate the loss function $\mathcal{L}_{batch}$ and use the chain rule to backpropagate the gradient to calculate gradient of the parameters $g$;
        \State Update Path-LLM parameters $w \leftarrow w-\eta g$, where $\eta$ is the adaptive learning rate that adjusts the parameter updates;
    \EndFor
    \State Phase-IV: Path-LLM embedding generation
    \For{$v_i \in V$}
        \State $\{ \varepsilon_{i}\}^{r}_{i=1}\leftarrow f(\mathcal{X}'_{v})$ from frozen Path-LLM;
        \State $\xi_{v_i} = \frac{1}{r}\sum\limits_{i=1}^{r} \varepsilon_i$;
        \State $\xi_{v}\leftarrow \xi_{v}\cup \{\xi_{v_i}\}$;
    \EndFor
    \State \Return{$\xi_{v}$}; 
  \end{algorithmic}
\end{algorithm}

\section{PATH-LLM BASED KEYWORD SEARCH}
\label{new-task}
 
To illustrate the 
usefulness of Path-LLM, 
we investigate the application of our learned Path-LLM embeddings to help tackle one typical 
graph analytics 
task of keyword search. 

\stitle{Keyword search}. Keyword search is a classical graph query processing task widely applied in databases and recommendation systems~\cite{wang2010survey, li2008ease}, particularly retrieval-augmented generation~\cite{he2024g}. 
Given a graph $G$ associated with node keywords and a set of query keywords $Q$, 
the goal of keyword search is to find the best subgraph $H$ of $G$ such that $H$ covers all keywords $Q$ and has the smallest edge weights. 
Following~\cite{keywordsearch}, we formulate the problem as follows. Given a weighted graph $G=(V, E, W)$ 
a set of query keywords $Q=\{q_1,...,q_m\}$, 
the problem returns a tree structure $T\subseteq G$, 
such that
\begin{enumerate}
    \item $V_T$ covers all keywords in $Q$;
    \item the edge weight $f(T)$ is minimized among all feasible choices, where $f(T)=\sum_{e_{i,j}\in E_T} w_{i,j}$.
\end{enumerate}
However, in many real-world applications of keyword search, it strictly follows the original structure of given graphs but neglects the semantic connections of node weights, which limits the discovery of real close subgraph patterns w.r.t. the given keywords. Even worse, in some cases, the edge weights are not given in advance for the querying graph. In this paper, we \emph{leverage the Path-LLM embedding to reconstruct a new weighted graph $G^*(V, E^*, W^*)$} and \emph{search for an integrated keyword answer with close connections in terms of both topology structure and node semantics}. 

\begin{table*}[t]
\small
\centering
\setlength{\tabcolsep}{8pt}
  \caption{Statistics of tested datasets.}
  \label{tab:dataset}
  \begin{tabular}{ccccccc}
    \toprule
     & \multicolumn{3}{c}{Statistics of graphs} & \multicolumn{3}{c}{The number of training paths} \\ 
     \cmidrule(lr){2-4}  \cmidrule(lr){5-7}
    Dataset & \#Nodes & \#Edges & Graph type  & 
    \#RW-paths~\cite{walklm} 
    &  \#\ltos-paths &  The ratio of saving paths \\
    \hline
     PubMed~\cite{pubmed2}  &  63,109  & 244,986 &   \emph{heterogeneous}  &  295,512  & 19,670  &  \textbf{93.35\%}  \\ 
     Freebase~\cite{bollacker2008freebase} & 180,098  & 1,057,688   &   \emph{heterogeneous} &  300,640    & 28,041   &  \textbf{90.67\%}  \\
     \hline
    Cora~\cite{cora}  &  2,708  & 5,429 & \emph{homogeneous}   &  100,000  & 12,932   & \textbf{87.07\%}  \\
    Citeseer~\cite{citeseer}  & 3,312  & 8,554  &  \emph{homogeneous}  & 100,000  & 11,504  &  \textbf{88.50\%} \\
    Facebook~\cite{leskovec2012learning}  &  4,039 & 88,234 & \emph{homogeneous} & 124,881 & 9,010 & \textbf{92.78\% }  \\
    OGB-ARXIV~\cite{arxiv}  &  169,343  &  1,166,243 &  \emph{homogeneous} &  350,000  &  20,523   &  \textbf{94.14\%}   \\
  \bottomrule
\end{tabular}
\end{table*}

\begin{table*}[t]
\small
   \centering
   \renewcommand\arraystretch{1.02}
   \setlength{\tabcolsep}{3pt}
   \caption{Mean results of node classification on six datasets in self-supervised settings: PubMed, Freebase, Cora, ARXIV, Citeseer and Facebook. The best performances are in bold, and the second-best are shaded in gray.}
   \label{tab:node}
   \begin{tabular}{lcccccccccccc}\hline
   Datasets & \multicolumn{2}{c}{PubMed} & \multicolumn{2}{c}{Freebase} & \multicolumn{2}{c}{Cora} & \multicolumn{2}{c}{ARXIV} & \multicolumn{2}{c}{Citeseer} & \multicolumn{2}{c}{Facebook} \\ 

   \cmidrule(lr){2-3} \cmidrule(lr){4-5}  \cmidrule(lr){6-7} \cmidrule(lr){8-9} \cmidrule(lr){10-11} \cmidrule(lr){12-13}
    Metrics & Macro-F1 & Micro-F1 & Macro-F1 & Micro-F1 & Macro-F1 & Micro-F1 & Macro-F1 & Micro-F1 & Macro-F1 & Micro-F1 & Macro-F1 & Micro-F1 \\ \hline

    GCN~\cite{gcn}  & 0.2593  &  0.3570 & 0.0411 & 0.1197   &  0.3628  & 0.4040 & 0.1681 & 0.5199 & 0.4627 & 0.4903 & 0.1542 & 0.2567 \\  
    GraphSage~\cite{graphsage}  & 0.2140 & 0.2430 & 0.0486 & 0.1041 & 0.3684 & 0.4140 & 0.1962 & 0.5581 & 0.4903 & 0.5240 & 0.2641 & 0.36  \\
    GATv2~\cite{gatv2}  & 0.2501 & 0.2870 & 0.0859 & 0.1468 &  0.3706 & 0.4740 & 0.1199 & 0.4443 & 0.4816 & 0.5530 &  0.4354 & 0.2573 \\ \hline
    WalkLM~\cite{walklm} & 0.2871 &  0.4070 & 0.2014 & 0.5437 & 0.4031 & 0.5336  & 0.0872 & 0.3823 & 0.5761 & 0.6616 & 0.3327 & 0.51    \\ 
    Llama 2~\cite{llama2} &  $\colorbox{Gainsboro}{\text{0.7167}}$  &  $\colorbox{Gainsboro}{\text{0.7246}}$ & 0.5141 & 0.7087   & 0.6608  &  0.6946  & 0.3484 & 0.5748 & 0.6739 & 0.7281 & 0.4053 & 0.5366  \\ 
    OFA~\cite{liu2024one} & 0.6520 & 0.6761 & 0.5376 & 0.7178 & 0.6720  & 0.7045 & 0.3199 & 0.5792  & 0.6280   & 0.6987 & 0.4053 & 0.5366 \\
    GraphTranslator~\cite{zhang2024graphtranslator} & 0.4917  & 0.5396 & 0.4417 & 0.6768  & 0.5998   & 0.6503 & 0.3431 & 0.5651 & 0.6445 & 0.7077 & 0.3997 &  $\colorbox{Gainsboro}{\text{0.5533}}$ \\
    GraphGPT~\cite{tang2024graphgpt} & 0.7088  & 0.7202 &  $\colorbox{Gainsboro}{\text{0.5715}}$ &  $\colorbox{Gainsboro}{\text{0.7359}}$  & $\colorbox{Gainsboro}{\text{0.6766}}$   & $\colorbox{Gainsboro}{\text{0.7097}}$ & $\colorbox{Gainsboro}{\text{0.3610}}$ & $\colorbox{Gainsboro}{\text{0.5811}}$ & $\colorbox{Gainsboro}{\text{0.6785}}$ & $\colorbox{Gainsboro}{\text{0.7335}}$ &  $\colorbox{Gainsboro}{\text{0.4166}}$  & 0.5400  \\
     \hline
    \textbf{Path-LLM (Ours)}  & \textbf{0.7595}  & \textbf{0.7674} &\textbf{ 0.6101} & \textbf{0.7419}   & \textbf{0.7524}  & \textbf{0.7773} & \textbf{0.4529} & \textbf{0.6616}  & \textbf{0.7034} & \textbf{0.7545}  & \textbf{0.4409} & \textbf{0.5600} \\
  \hline
    \end{tabular}
\end{table*}

\stitle{
{Graph structure construction}}. 
Given a TAG $G (V, E)$, we construct a new structure of graph $G^*$ based on the Path-LLM node embedding.
Specifically, we first collect all isolated nodes $V$ and then add edges to them as follows.
For an edge $(v_i, v_j)\in E$, we measure its importance by calculating cosine-similarity \cite{cosine} based on their embedding vectors $\xi_i,\xi_j$ derived from Path-LLM, i.e., 
\begin{equation}
        \psi_{i,j} = \frac{\xi_i\cdot\xi_j}{\lVert \xi_i \rVert \lVert \xi_j \rVert },\ \text{where}\  \psi_{i,j}\in [-1,1]. 
\end{equation}
For $\psi_{i,j}<0$, we consider $\xi_i$ and $\xi_j$ to be dissimilar, which is treated as the minimum non-negative edge weight of 0. Thus, we design a mapping function $f^*$ as:
\begin{equation}
\label{edge-importance}
    f^*(e_{i,j}) = \left\{ \begin{array}{rcl}
    0 & \mbox{for}
    & \psi_{i,j} \leq 0 \\ \psi_{i,j} & \mbox{for} & \psi_{i,j} > 0 .
    \end{array}\right. 
\end{equation}
Only for a positive weight $\psi_{i,j}$, we add an edge connection between $v_i$ and $v_j$. Thus, the new edge set is $E^*=\{e_{i,j}: v_i, v_j\in V, f^*(e_{i,j})>0\}$. 
Therefore, we obtain the topology structure $V$ and $E^*$ of new TAG $G^*$ without weights.

\stitle{
{Edge importance weights assignment in 
$G^*$}}. 
{To find the most important subgraph $T$ satisfying the requirements of both with minimum edges and semantically strongest related, covering all keywords in $Q$, we assign edge weights based on edge importance values. Considering one subgraph $T$, the importance of $T$ is measured as the product of all edge importance values in $T$, i.e., $\prod_{e_{i,j}\in E_T} f^*(e_{i,j})$. Next, we convert the edge importance to the edge weight as $w^*_{i,j} = -\log{f^*(e_{i,j})}$, thereby transforming our keyword search task into the traditional keyword search task as follows, }
\begin{equation}
        \nonumber
        \min\limits_{E_T\in E^*}\sum w^*_{i,j} = \min\limits_{E_T\in E^*}\sum \text{-}\log f^*(e_{i,j}) \Leftrightarrow \max\limits_{E_T\in E^*}\prod f^*(e_{i,j}).
\end{equation}
{The goal of finding traditional Steiner Tree is $\min \sum_{e_{i,j}\in E_T} w^*_{i,j}$, the same as finding the most important subtree $T$, $\max \prod_{e_{i,j}\in E_T} f^*(e_{i,j})$.}
As a result, we finally obtain a new weighted TAG $G^* (V, E^*, W^*)$. Across this TAG, we can search the subtree that satisfies both the fewest edges and semantically strongest related.

\stitle{Our solution of keyword search over $G^*$}. We first consider one simple case of keyword search with $|Q|=2$ for two query nodes $v_i$, $v_j$. We use Dijkstra's algorithm~\cite{dijkstra} to search for the shortest path between $v_i$ and $v_j$ based on the new weight $W^*$. For a general keyword search with $|Q|\geq 3$, 
we adopt a $2$-approximation greedy algorithm~\cite{mehlhorn1988faster} for finding a Steiner Tree to cover all keywords $Q$ to tackle this NP-hard problem.

\section{EXPERIMENTS}
\label{experiment}

\stitle{Datasets.}
We conduct evaluations on six diverse datasets in Table~\ref{tab:dataset}, encompassing graphs of varying scales and types: PubMed~\cite{pubmed2}, Freebase~\cite{bollacker2008freebase}, Cora~\cite{cora}, Citeseer~\cite{citeseer}, Facebook~\cite{leskovec2012learning}, and ARXIV~\cite{arxiv}. 
PubMed is a biomedical network containing four types of nodes: genes, diseases, chemicals, and species~\cite{pubmed2}. Freebase is a multi-domain knowledge graph with eight types of nodes, ranging from books to business~\cite{bollacker2008freebase}. Facebook is a social network with textual user attributes~\cite{leskovec2012learning}.
The other three datasets are citation networks with text attributes like title, abstract, and so on. 
Raw text data of Cora and Citeseer are collected from \cite{datasource}.  
Table~\ref{tab:dataset} also reports the number of training paths for WalkLM and Path-LLM, respectively. Our Path‐LLM, as an effective learning model, \emph{uses significantly fewer L2SP‐based shortest paths} compared to the number of random‐walk‐based paths in WalkLM~\cite{walklm}, \emph{saving the number of training paths by an average of 91.09\% across six datasets.}

\begin{table*}[t]
\small
   \centering
   \renewcommand\arraystretch{1.02}
   \setlength{\tabcolsep}{4pt}
   \caption{The results of edge validation on six datasets in self-supervised settings: PubMed, Freebase, Cora, ARXIV, Citeseer and Facebook. The best performances are in bold, and the second-best are shaded in gray. Corresponding std are provided in Appendix  .}
   \label{tab:link}
   \begin{tabular}{lcccccccccccc}\hline

   Datasets & \multicolumn{2}{c}{PubMed} & \multicolumn{2}{c}{Freebase} & \multicolumn{2}{c}{Cora} & \multicolumn{2}{c}{ARXIV} & \multicolumn{2}{c}{Citeseer} & \multicolumn{2}{c}{Facebook} \\ 
   \cmidrule(lr){2-3} \cmidrule(lr){4-5}  \cmidrule(lr){6-7} \cmidrule(lr){8-9} \cmidrule(lr){10-11} \cmidrule(lr){12-13}
    Metrics & AUC & Accuracy & AUC & Accuracy & AUC & Accuracy & AUC & Accuracy & AUC & Accuracy  & AUC & Accuracy \\ \hline
    GCN~\cite{gcn}  & 0.5155 & 0.5426 &  0.5012 & 0.0144  & 0.6680 & 0.7018 & 0.5216 & 0.4415 & 0.6392& 0.6052 & 0.4354 & 0.2573 \\  
    GraphSage~\cite{graphsage}  & 0.5133 & 0.5211 & 0.5426 & 0.6519 & 0.7445 & 0.4052 & 0.5110 & 0.3120 & 0.7822 & 0.5227 & 0.5157 &  0.3900 \\
    GATv2~\cite{gatv2}  & 0.5204 & 0.5011 & 0.6065 & 0.5390  & 0.5143   & 0.4488 & 0.2550 & 0.6120 & 0.7488 & 0.5986 &  0.2117 & 0.39  \\ \hline
    WalkLM~\cite{walklm} & 0.5962 & 0.5684 & 0.7039  & 0.6519  & 0.8581 & 0.7746 & 0.8799 & 0.7923 & 0.9149 & 0.8424 &  0.5665 & 0.5251    \\ 
    Llama 2~\cite{llama2} & $\colorbox{Gainsboro}{\text{0.7144}}$ &  $\colorbox{Gainsboro}{\text{0.6665}}$ & 0.8183  & 0.7481 & 0.8568 &  0.7809 & 0.9157 & 0.8379 & 0.9290 & 0.8550 & 0.6454 & 0.5891 \\ 
    OFA~\cite{liu2024one} & 0.6061 & 0.5719 & 0.8203 & 0.7517  & 0.8473	& 0.7585 & 0.9091 & 0.8291  & 0.8979 & 0.8125  & 0.5739 & 0.5380 \\
    GraphTranslator~\cite{zhang2024graphtranslator} & 0.5939 & 0.5618 & 0.7842  & 7157   & 0.8473 & 0.7585 & 0.9091 & 0.8291 & 0.8979 & 0.8124 & 0.6049 & 0.5891  \\
    GraphGPT~\cite{tang2024graphgpt} & 0.7134  & 0.6631 & $\colorbox{Gainsboro}{\text{0.8381}}$ & $\colorbox{Gainsboro}{\text{0.7683}}$  & $\colorbox{Gainsboro}{\text{0.8679}}$   & $\colorbox{Gainsboro}{\text{0.7857}}$ & $\colorbox{Gainsboro}{\text{0.9221}}$ & $\colorbox{Gainsboro}{\text{0.8430}}$ &  $\colorbox{Gainsboro}{\text{0.9329}}$ & $\colorbox{Gainsboro}{\text{0.8570}}$ & $\colorbox{Gainsboro}{\text{0.6921}}$ & $\colorbox{Gainsboro}{\text{0.6357}}$  \\
     \hline
    \textbf{Path-LLM (Ours)}  & \textbf{0.7497}  &  \textbf{0.7111} &  \textbf{0.8456} & \textbf{0.7794}  & \textbf{0.9244}  &  \textbf{0.8476} & \textbf{0.9655} & \textbf{0.9060} & \textbf{0.9627} & \textbf{0.8966} & \textbf{0.7019} & \textbf{0.6423} \\
  \hline
    \end{tabular}
\end{table*}

\stitle{Competitive methods.}
We compare our 
Path-LLM with the SOTA GNN-, LM- and LLM-based methods as follows. 
\squishlisttight
\item \textbf{Three classical graph learning models}: 
We test three methods, GCN~\cite{gcn}, GraphSage~\cite{graphsage}, and GATv2~\cite{gatv2}.
The GATv2~\cite{gatv2} proposes a dynamic graph attention variant and captures more expressive graph structures.
\item \textbf{WalkLM}~\cite{walklm}: WalkLM is the state-of-the-art LM-based method, integrating random walks and RoBERTa~\cite{liu2019roberta} for unified graph representation learning.
\item \textbf{Llama 2-7B}~\cite{llama2}: Llama 2-7B is the widely used large language model for graph embeddings with exceptionally powerful semantic representation abilities.
\item \textbf{OFA}~\cite{liu2024one}: OFA uses natural language to describe nodes and then encodes texts to feature vectors as node embeddings.
\item \textbf{GraphTranslator}~\cite{zhang2024graphtranslator}: GraphTranslator proposes the graph-text alignment method to derive node embeddings.
\item \textbf{GraphGPT}~\cite{tang2024graphgpt}: GraphGPT proposes to encode graph structures and utilizes a self-supervised instruction tuning method for Vicuna-7B-v1.5 training, which is the state-of-the-art LLM-based baseline for graph representation.
\squishend

\stitle{Experimental settings.} {We mainly compare eight advanced baselines under the setting of self-supervised graph learning.}
{Most experimental settings follow WalkLM, including train-test data split ratio 8:2, five-fold cross-validation, and the one-layer MLP, except for epoch settings and supplementary models.} 
To assess the effectiveness of different graph embeddings, we use a simple but challenging setting designed to \emph{better differentiate embedding performance while saving cost and time}. 
For node classification, the one-layer MLP training epoch is set to 50, compared to 2000 epochs in WalkLM. For edge validation, the MLP training epoch is set to 100 without any additional models, compared to 2000 epochs in WalkLM with the LMNN as supplementary models.
We choose Llama 2‐7B~\cite{llama2} as the base LLM in our Path-LLM method. All models are optimized using the Adam optimizer~\cite{adam} with an initial learning rate of 2e‐4. The rank and scaling factors of the LoRA adapter~\cite{lora} are set to 16 and 32, respectively. The minimum length of the long shortest path is set to $L=10$ for particular social networks with an average distance of $4$ to $7$. For other parameters, we set $k=5$, $b=1000$ and $\ell=3$. 
All experiments are implemented with an NVIDIA A100 (80G) GPU. 


\stitle{Exp-1: Effectiveness on node classification task.}
Node classification task is to assign labels to nodes in a graph based on node embeddings integrating the attributes of nodes and the relationships between them.
For node classification, we train a separate one-layer MLP classifier based on all unified graph embeddings and evaluate graph embeddings derived from all methods with Macro-F1 (across all labels) and Micro-F1 (across all nodes)~\cite{f1}. Note that for GNNs, we train GNNs through contrastive learning to generate unified graph embeddings and then feed them into MLP for downstream~tasks.
As shown in Table~\ref{tab:node}, 
our proposed Path-LLM showcases substantial performance enhancements over WalkLM and other GNN-based and LLM-based competitors on six diverse datasets.
For PubMed, Path-LLM achieves a remarkable 174.56\% relative improvement in macro-F1 and 104.24\% in micro-F1 over WalkLM. 
Moreover, Path-LLM outperforms WalkLM by achieving a 72.58\% relative improvement in macro-F1 and 36.82\% in micro-F1 on Cora. Even on large-scale graphs, Path-LLM outperforms WalkLM by achieving a 419.38\% relative improvement in macro-F1 and 73.06\% in micro-F1 on ARXIV.
In addition, it demonstrates that the shortest path structure can significantly enhance the effectiveness of Path-LLM, with an average 4.66\% gain of the pure LLM method Llama 2~\cite{llama2} on PubMed and an average 12.89\% gain of  Llama 2~\cite{llama2} on Cora, verifying that the structural information learned by Path-LLM is beneficial for node classification. 
Furthermore, it also demonstrates that our self-supervised approach can learn graph structure features better for unified graph embeddings compared to the existing state-of-the-art LLM self-supervised method. 
Compared to GraphGPT~\cite{tang2024graphgpt}, Path-LLM outperforms GraphGPT across all six datasets. On large-scale graphs, Path-LLM achieves relative gains of 25.46\% in macro-F1 on ARXIV and 6.75\% on Freebase. {Additionally, we also show results of node classification in the setting of WalkLM in Table~\ref{tab:walklmsetting}}.

\begin{table*}[t]
\small
   \renewcommand\arraystretch{1.1}
   \setlength{\tabcolsep}{6pt}
   \centering
   \caption{Ablation study: Experimental results of different path structures involving 1-hop neighbors, random walks (RW), ($\alpha,\beta,\gamma$) RW, randomly sampled short shortest paths (Random Short) and long shortest paths.}
   \label{tab:path-structure}
   \begin{tabular}{lcccccccc}\hline
   Datasets & \multicolumn{4}{c}{PubMed} & \multicolumn{4}{c}{Cora} \\
   \cmidrule(lr){2-5} \cmidrule(lr){6-9}
   Tasks &  \multicolumn{2}{c}{Node classification}  &   \multicolumn{2}{c}{Edge Validation} & \multicolumn{2}{c}{Node classification}  &   \multicolumn{2}{c}{Edge Validation}  \\
   \cmidrule(lr){2-3} \cmidrule(lr){4-5}  \cmidrule(lr){6-7} \cmidrule(lr){8-9}
    Metrics & Macro-F1 & Micro-F1 & AUC & Accuracy & Macro-F1 & Micro-F1 & AUC & Accuracy  \\ \hline
    LLM &  0.716 \textcolor{blue}{\footnotesize(-3.1\%)}  &  0.724\textcolor{blue}{\footnotesize(-3.1\%)}   &  0.714\textcolor{blue}{\footnotesize(-3.5\%)} &  0.666\textcolor{blue}{\footnotesize(-4.5\%)} & 0.660\textcolor{blue}{\footnotesize(-9.2\%)}  &  0.694\textcolor{blue}{\footnotesize(-8.3\%)}  & 0.856\textcolor{blue}{\footnotesize(-6.8\%)} &  0.780\textcolor{blue}{\footnotesize(-6.7\%)} \\
    
    LLM+1hop &  0.721 \textcolor{blue}{\footnotesize(-2.6\%)}  &  0.737\textcolor{blue}{\footnotesize(-1.8\%)}   & 0.653\textcolor{blue}{\footnotesize(-9.6\%)} &  0.612\textcolor{blue}{\footnotesize(-9.9\%)} & 0.673\textcolor{blue}{\footnotesize(-7.9\%)}  &  0.704\textcolor{blue}{\footnotesize(-7.1\%)} & 0.870\textcolor{blue}{\footnotesize(-5.4\%)} &  0.791\textcolor{blue}{\footnotesize(-5.6\%)} \\ 
    
    LLM+RW  & 0.723 \textcolor{blue}{\footnotesize(-2.4\%)}  &  0.735\textcolor{blue}{\footnotesize(-2.0\%)}  & 0.642\textcolor{blue}{\footnotesize(-10.7\%)}  & 0.604\textcolor{blue}{\footnotesize(-10.7\%)} & 0.668\textcolor{blue}{\footnotesize(-8.4\%)}  & 0.701\textcolor{blue}{\footnotesize(-7.8\%)} & 0.872\textcolor{blue}{\footnotesize(-5.2\%)}  & 0.789\textcolor{blue}{\footnotesize(-5.8\%)} \\
    
    LLM+($\alpha,\beta,\gamma$)RW~\cite{abcrw} & 0.721 \textcolor{blue}{\footnotesize(-2.6\%)}  &  0.737\textcolor{blue}{\footnotesize(-1.8\%)}  & 0.663\textcolor{blue}{\footnotesize(-8.6\%)}  & 0.624\textcolor{blue}{\footnotesize(-8.7\%)}  & 0.675\textcolor{blue}{\footnotesize(-7.7\%)}  &  0.706\textcolor{blue}{\footnotesize(-7.1\%)}  & 0.875\textcolor{blue}{\footnotesize(-4.9\%)} &  0.798\textcolor{blue}{\footnotesize(-4.9\%)}\\
    
    LLM+Random Short  & 0.728 \textcolor{blue}{\footnotesize(-1.9\%)}  & 0.742\textcolor{blue}{\footnotesize(-1.3\%)}  & 0.655\textcolor{blue}{\footnotesize(-9.4\%)}  & 0.614\textcolor{blue}{\footnotesize(-9.7\%)} & 0.674\textcolor{blue}{\footnotesize(-7.8\%)}  &  0.708\textcolor{blue}{\footnotesize(-6.9\%)} & 0.886\textcolor{blue}{\footnotesize(-3.8\%)}  & 0.806\textcolor{blue}{\footnotesize(-4.1\%)} \\
    
    LLM+Long SP  & 0.729 \textcolor{blue}{\footnotesize(-1.8\%)}  &  0.744\textcolor{blue}{\footnotesize(-1.1\%)} & 0.658\textcolor{blue}{\footnotesize(-9.1\%)}  &  0.619\textcolor{blue}{\footnotesize(-9.2\%)} & 0.669\textcolor{blue}{\footnotesize(-8.3\%)}  &  0.705\textcolor{blue}{\footnotesize(-7.2\%)} & 0.877\textcolor{blue}{\footnotesize(-4.7\%)}  & 0.800\textcolor{blue}{\footnotesize(-4.7\%)} \\
     \hline
    \textbf{LLM + L2SP}  & \textbf{0.747}  & \textbf{0.755} & \textbf{0.749}  &  \textbf{ 0.711} & \textbf{0.752}  & \textbf{0.777} & \textbf{0.924}  &  \textbf{0.847} \\
  \hline
    \end{tabular}
\end{table*}

\begin{figure}[t]
\centering
  \includegraphics[width=0.38\textwidth]{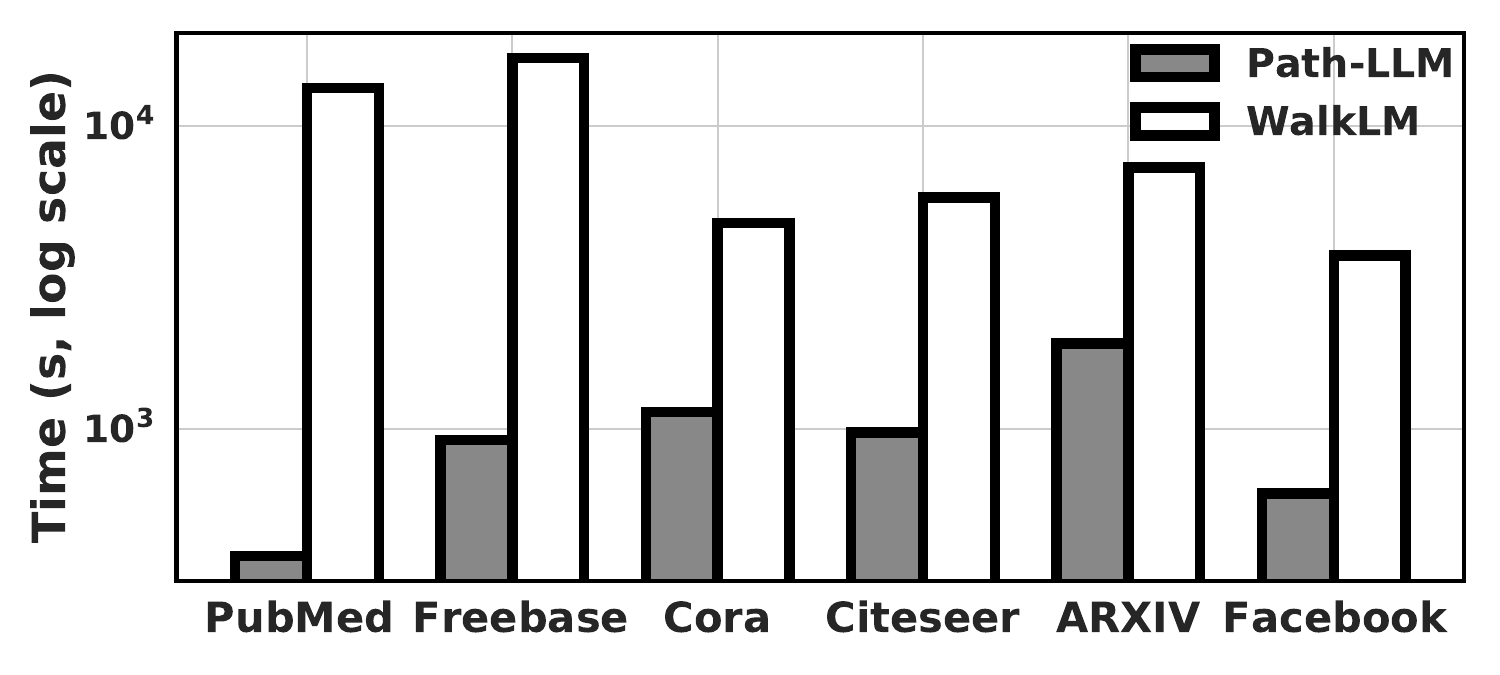}
  \caption{Efficiency evaluation. Training time comparison of Path-LLM and WalkLM across six datasets (PubMed, Freebase, Cora, Citeseer, ARXIV, and Facebook).}
  \label{fig:efficiency}
\end{figure}

\stitle{Exp-2: Effectiveness on edge validation task.}
The edge validation task involves determining whether an edge exists between two given nodes (refer to link prediction in~\cite{walklm}\cite{abcrw}\cite{tang2024graphgpt}\cite{liu2024one}\cite{chen2024llaga}). Formally, given two nodes $u$ and $v$, verify whether $(u,v)\in E$ or $(u,v)\notin E$. We verify the existence of $(u,v)$ based on the learned graph embeddings.
Specifically, we use the Hadamard function to construct feature vectors for node pairs and train a two-layer MLP classifier on the selected links. We evaluate graph embeddings with AUC (area under the ROC curve) and Accuracy~\cite{auc}. Note that for GNNs, we use GNNs to generate graph embeddings and then construct feature vectors for node pairs.
Table~\ref{tab:link} shows that our Path-LLM achieves remarkable performance in uncovering latent associations among nodes in text-attributed graphs. This implies that Path-LLM possesses a more precise grasp of graph structure through semantic information integration, thus conferring advantages in edge validation tasks.
Path-LLM consistently outperforms WalkLM,
achieving a notable 25.74\% relative improvement in AUC and 25.11\% in Accuracy on PubMed. Meanwhile, Path-LLM achieves a notable 7.73\% relative improvement in AUC and 9.42\% in Accuracy over WalkLM on Cora. On the large-scale citation network ARXIV, Path-LLM shows a relative performance gain of 9.73\% in AUC and 14.35\% in Accuracy over WalkLM.
Compared to pure LLM, the structural advantages displayed by the \ltos on PubMed have an average 5.82\% increase and a 7.90\% increase in AUC on Cora. 
Table~\ref{tab:link} shows that our Path-LLM self-supervised method of learning graph structure from L2SP is significantly effective, outperforming GraphGPT~\cite{tang2024graphgpt} on all six datasets, with an average relative gain of 4.54\% in AUC and 6.39\% in Accuracy. {Additionally, we also show results of edge validation in the setting of WalkLM in Table~\ref{tab:walklmsetting}}.



\stitle{Exp-3: Training duration across six datasets.}
We compare the training times of WalkLM and Path-LLM. Even though WalkLM utilizes the smaller language model distill-roberta-66M, Path-LLM demonstrates significantly faster performance on various graphs, as illustrated in Figure~\ref{fig:efficiency}. 
On average, Path-LLM is 12 times faster than WalkLM across these datasets and 11 times faster across millions-scale graphs. Notably, on PubMed, Path-LLM takes only 380 seconds, while WalkLM takes 13,370 seconds, making Path-LLM approximately 35 times faster. {Based on the analysis of complexity, two main reasons are: (1) the average text length $\hat{S}$ in Path-LLM is much smaller than $\hat{S}$ in WalkLM, and (2) the number of training paths $|\mathbb{P}_{short}|$ in Path-LLM is much less than $|N|$ in WalkLM.}
Additionally, the number of L2SP paths employed for training Path-LLM is significantly lower than the number of Random Walk paths used for training WalkLM, resulting in an average reduction of 91.09\% in training data across six datasets. 
Efficiency results shown in Figure~\ref{fig:efficiency} verify that the average text length is a critical factor affecting training time, represented by $\hat{S}$ in the time complexity $O(B_{LLM}(I,|\mathbb{P}_{short}|,\hat{S},d))$ of the Path‐LLM training process. L2SP‐based texts in PubMed are much shorter than those in the other three citation networks. As a result, although there are more L2SP paths in PubMed than in Cora and Citeseer, the model training is significantly faster in PubMed.

\begin{table}[t]
\small
   \centering
   \renewcommand\arraystretch{1}
   \caption{Results in the setting of WalkLM.}
   \label{tab:walklmsetting}
   \begin{tabular}{lcccc}\hline
   Datasets  &   \multicolumn{4}{c}{PubMed}  \\
   \cline{2-5}
    Tasks &  \multicolumn{2}{c}{Node classification}  &   \multicolumn{2}{c}{Edge validation}   \\
    \cmidrule(lr){2-3} \cmidrule(lr){4-5}
    Metrics  & Macro-F1 & Micro-F1 & AUC & Accuracy \\ \hline
    WalkLM &  0.6044 & 0.6210 & 0.8359 & 0.7754 \\
    Path-LLM &  0.7622 & 0.7674 & 0.8640 & 0.7983 \\
  \hline
    \end{tabular}
\end{table}

\begin{table}[t]
\small
   \centering
   \renewcommand\arraystretch{1}
   \caption{Ablation study of hyperparameters $L$, when $k=10$.}
   \label{tab:l}
   \begin{tabular}{lcccc}\hline
   Datasets  &   \multicolumn{4}{c}{PubMed}  \\
   \cline{2-5}
    Tasks &  \multicolumn{2}{c}{Node classification}  &   \multicolumn{2}{c}{Edge validation}   \\
    \cmidrule(lr){2-3} \cmidrule(lr){4-5}
    Metrics  & Macro-F1 & Micro-F1 & AUC & Accuracy \\ \hline
    $L=4$ &  0.7285 & 0.7422 & 0.6829 & 0.6424 \\
    $L=6$ &  0.7308 & 0.7445 & 0.7085 & 0.6734 \\
    $L=8$ &  $\colorbox{Gainsboro}{\text{0.7372}}$ & $\colorbox{Gainsboro}{\text{0.7466}}$ & \textbf{0.7560} & \textbf{0.7104}  \\
    $L=10$ &  \textbf{0.7501} & \textbf{0.7532} & $\colorbox{Gainsboro}{\text{0.7362}}$ & $\colorbox{Gainsboro}{\text{0.6941}}$ \\
    $L=12$ &  0.7277 & 0.7357 & 0.6995& 0.6595 \\
  \hline
    \end{tabular}
\end{table}

\begin{table}[t]
\small
   \centering
   \renewcommand\arraystretch{0.9}
   \caption{Ablation study of hyperparameters $k$, when $L=10$.}
   \label{tab:k}
   \begin{tabular}{lcccc}\hline
   Datasets  &   \multicolumn{4}{c}{PubMed}  \\
   \cline{2-5}
    Tasks &  \multicolumn{2}{c}{Node classification}  &   \multicolumn{2}{c}{Edge validation}   \\
    \cmidrule(lr){2-3} \cmidrule(lr){4-5}
    Metrics  & Macro-F1 & Micro-F1 & AUC & Accuracy \\ \hline
    $k=1$ &  \textbf{0.7529} & \textbf{0.7644} & 0.6866 & 0.6451 \\
    $k=3$ &  0.7334 & 0.7467 & 0.6967 & 0.6571 \\
    $k=5$ &  0.7471 & $\colorbox{Gainsboro}{\text{0.7555}}$ & $\colorbox{Gainsboro}{\text{0.7497}}$ & \textbf{0.7111} \\
    $k=7$ &  0.7383 & 0.7511 & 0.7453 & 0.7063 \\
    $k=10$ & $\colorbox{Gainsboro}{\text{0.7501}}$ & 0.7532 & 0.7362 & 0.6941 \\
    $k=12$ & 0.7186 & 0.7379 & \textbf{0.7510} & $\colorbox{Gainsboro}{\text{0.7059}}$ \\
  \hline
    \end{tabular}
\end{table}

\stitle{Exp-4: Ablation studies.}
We perform an ablation study 
to verify the analysis in Section~\ref{text-with-properties}, demonstrating the effectiveness of our proposed \ltos structure
compared to other path structures. 
To conduct a comprehensive evaluation, we propose several baselines for ablation study: LLM with 1-hop neighbors, LLM with random walk, LLM with new $(\alpha, \beta, \gamma)$ random walk~\cite{abcrw}, LLM with long shortest path and LLM with randomly sampled short shortest path. Particularly, when reproducing $(\alpha, \beta, \gamma)$ random walk, we utilize Sentence-BERT~\cite{reimers2019sentence} to process node embeddings. Here, we present our findings of \ltos-based shortest paths and Path-LLM from PubMed, Freebase, Cora, ARXIV, Citeseer and Facebook.



\begin{figure}[t]
\centering
  \includegraphics[width=0.45\textwidth]{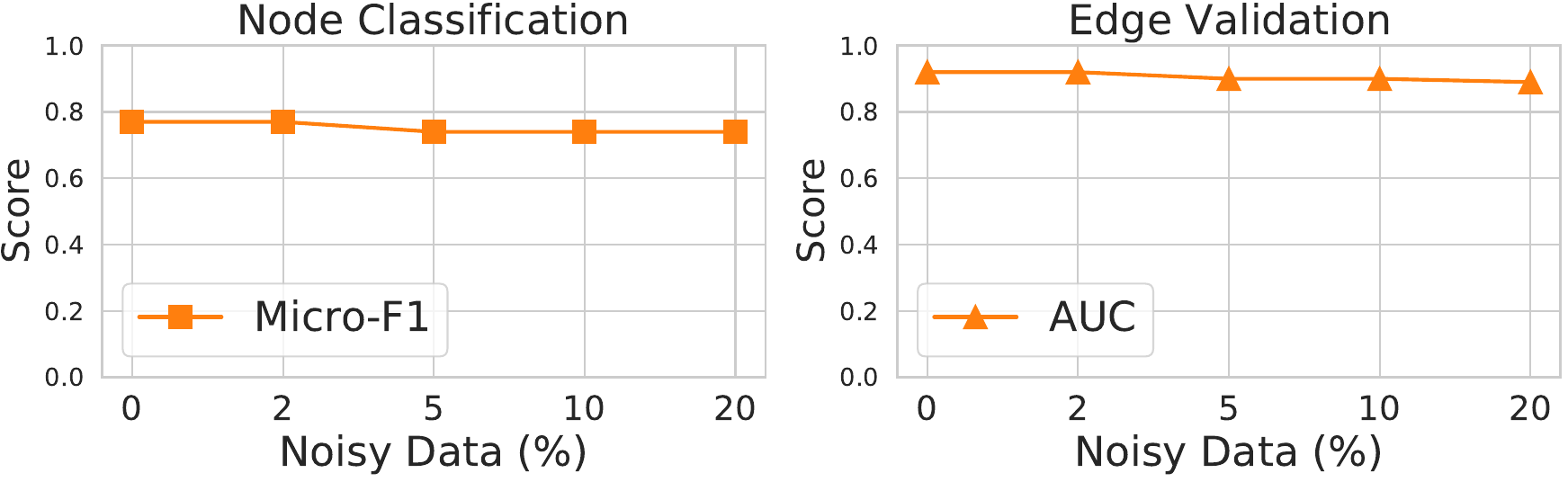}
  \caption{Robustness evaluation of Path-LLM.}
  \label{fig:robust}
\end{figure}

\begin{figure}[t]
\centering
  \includegraphics[width=0.47\textwidth]{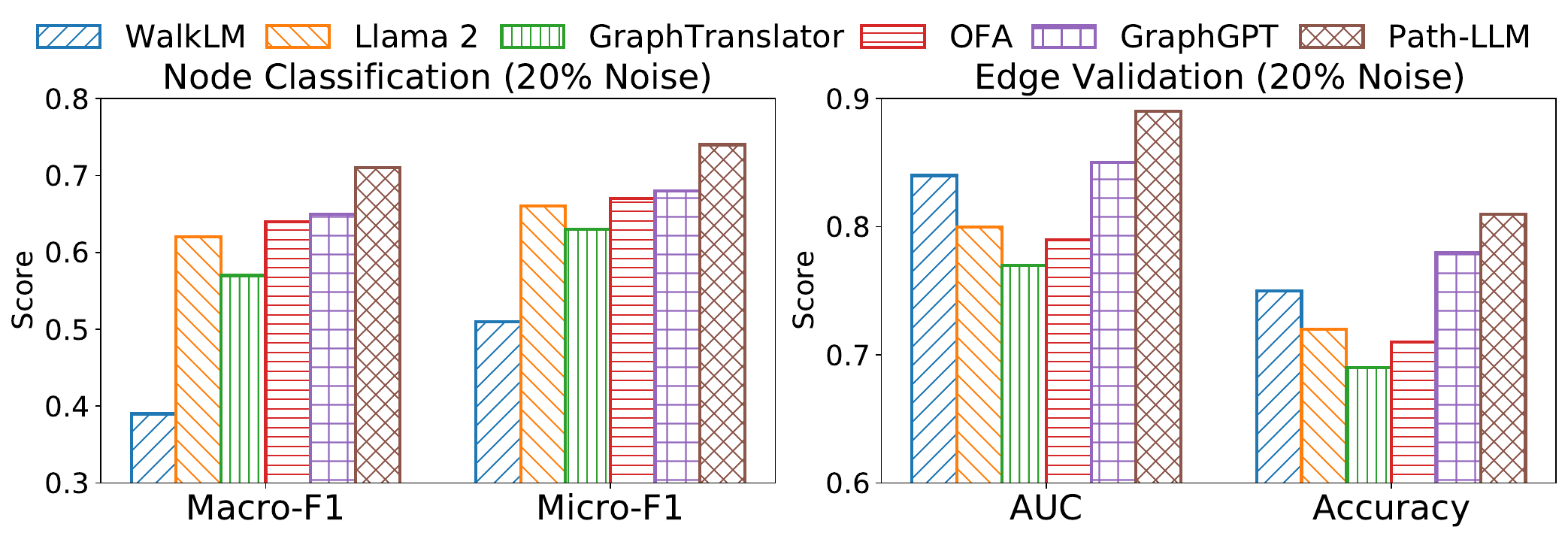}
  \caption{Performance comparison of Path-LLM and other LLM-based methods after adding 20\% noisy data in Cora.}
  \label{fig:robust comparison}
\end{figure}

\begin{table}[t]
\small
   \centering
   \renewcommand\arraystretch{1}
   \caption{Performance comparison of different LLMs on node classification (NC) and edge validation (EV).}
   \label{tab:different llm}
   \begin{tabular}{lcccc}\hline

   Datasets & \multicolumn{2}{c}{PubMed} & \multicolumn{2}{c}{Cora} \\ 
   \cline{2-3}  \cline{4-5}
   Tasks &  \multicolumn{1}{c}{NC}  &   \multicolumn{1}{c}{EV} & \multicolumn{1}{c}{NC}  &   \multicolumn{1}{c}{EV}  \\ 
   \cmidrule(lr){2-3} \cmidrule(lr){4-5}  
    Metrics &  Micro-F1 & AUC  & Micro-F1 & AUC   \\ \hline
   Llama 1~\cite{touvron2023llama} &  0.7158 & 0.6887  & 0.6990 & 0.8439\\
   Llama 1+\ltos & \textbf{0.7445} & \textbf{0.7322} & \textbf{0.7647} & \textbf{0.9179} \\ \hline
    Llama 2~\cite{llama2} & 0.7246  &  0.7144 &  0.6946  & 0.8568  \\ 
    \textbf{Path-LLM (Ours)}  & \textbf{0.7555} &    \textbf{0.7497}   & \textbf{0.7773} & \textbf{0.9244}  \\
  \hline
    \end{tabular}
\end{table}

\begin{table}[h]
    \centering
    \small
    \setlength{\tabcolsep}{8pt}
    \renewcommand\arraystretch{1.05}
    \caption{Results of average answer distance among query keywords. Lower distance shows better performance.}
    \begin{tabular}{cccc}
        \hline
        Keywords Number& $|Q|=2$ & $|Q|=3$ & $|Q|=4$ \\
        \hline
        uniform weight & 4.0 & 6.16 & 8.73 \\
        WalkLM~\cite{walklm} & 1.27 & 1.91 & 2.72 \\
        GraphGPT~\cite{tang2024graphgpt} & 1.24 & 1.87 & 2.63 \\
        \hline
        Path-LLM & \textbf{0.93} & \textbf{1.46} & \textbf{2.04} \\
        \hline
    \end{tabular}
    \label{tab:keyword_comparison}
\end{table}

\stitle{Ablation-study-1: \ltos-based shortest paths outperform random walks.}
Table~\ref{tab:path-structure} indicates that the \ltos-based shortest path structure outperforms the random walk structure on node classification and edge validation. We compare \ltos-based shortest paths with two random walk algorithms, which are the random walk in WalkLM and the new $(\alpha,\beta,\gamma)$ random walk. 
Path-LLM is superior to the new $(\alpha,\beta,\gamma)$ random walk with an average increase of 3.03\% on PubMed and 10.74\% on Cora for node classification. While in edge validation, RW-based LLM even decreased by 10.08\% of AUC after training with random walks compared to pure LLM. 
Our \ltos-based LLM outperforms the RW-based LLM with an average increase of 17.19\% on PubMed and 6.66\% on Cora, demonstrating that \ltos\ represents a better path for capturing graph structure compared to Random Walk.

\begin{figure*}[t]
\centering
   \includegraphics[width=0.85\textwidth]{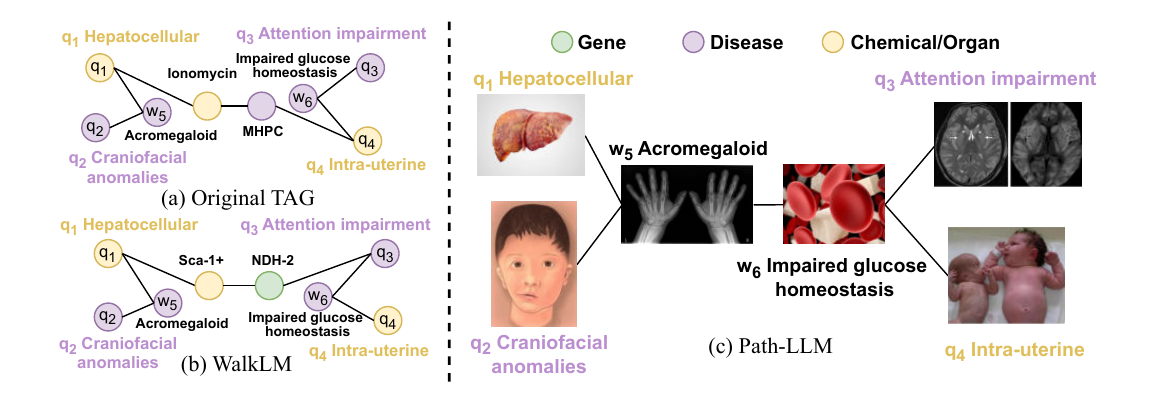}
  \caption{Case study of keyword search with a query $Q=\{q_1$: hepatocellular, $q_2$: craniofacial anomalies, $q_3$: attention impairment, $q_4$: intra-uterine$\}$.  Path-LLM finds a semantically meaningful answer with the smallest number of five edges. 
  }
  \label{fig:case}
\end{figure*}

\stitle{Ablation-study-2: \ltos-based shortest paths outperform other different path structures.}
Table~\ref{tab:path-structure} also compares the effectiveness of different graph embeddings with distinct path structures on node classification and edge validation.
It is evident that the \ltos-based shortest path structure outperforms all other path structures. 
Especially in edge validation, which evaluates the graph structure understanding of LLMs, the results show that different path structures capture different graph features, which influence LLMs' learning of the graph structure. Our proposed \ltos-based structure yields the best results among 
all other different path structures, with 12.31\% performance gains in AUC and 12.95\% accuracy over randomly sampled shortest paths on PubMed while an average of 5.62\% performance gains over original long shortest paths on Cora.

\stitle{Ablation-study-3: Different hyperparameters analysis.}
We conduct ablation studies varying hyperparameters $L$ and $k$, testing $L$ in the interval [4,12] and $k$ in the interval [1,12]. 
$L$ represents the minimum length of the long shortest path in the L2SP selection phase. $k$ denotes the number of random samples of long shortest paths with the same source and target nodes in the L2SP selection phase.
Results shown in Table~\ref{tab:l} show that Path‐LLM performs well in downstream tasks for $L$ in [8,10], and we choose $L=10$ in experiments. Table~\ref{tab:k} shows that $k=5$ is a good trade-off choice.

\stitle{Ablation-study-4: Our Path-LLM shows strong robustness even after adding 20\% noisy data.} For each text attribute in Cora, we add 2\%-20\% noisy and irrelevant words in random positions. We then use Path-LLM to generate embeddings based on noisy Cora. Results in Figure~\ref{fig:robust} show the strong robustness of our Path-LLM with a slight decrease of 4\%. In addition, we compare with other LLM-based baselines on Cora after adding 20\% noisy data. Results in Figure~\ref{fig:robust comparison} show that Path-LLM outperforms all baselines, once again verifying the strong robustness of Path-LLM.

\stitle{Ablation-study-5: Our \ltos-based self-supervised learning consistently enhances the performance of different LLMs.}
In Table~\ref{tab:different llm}, we evaluate Llama 1 and Llama 2. After L2SP-based self-supervised training, LLMs exhibit an average absolute improvement of 3\% in node classification and 4-5\% in edge validation.

\begin{figure}[t]
  \includegraphics[width=0.46\textwidth]{CORA.pdf}
  \caption{Embedding visualization of Cora.}
  \label{fig:tsne_cora}
\end{figure}

\begin{figure}[t]
  \includegraphics[width=0.46\textwidth]{PUBMED.pdf}
  \caption{Embedding visualization of PubMed.}
  \label{fig:tsne_pubmed}
\end{figure}

\stitle{Exp-5: Keyword search evaluation.} 
We quantitatively evaluate the effectiveness of keyword search. We use the sum distance among given keywords as our evaluation metric. The lower the distance, the closer the connection among keywords, and the better the result. We test three groups of keyword queries on PubMed, i.e., 2-keyword, 3-keyword, and 4-keyword. For each group, we randomly test 100 queries and get the average result. The results in Table~\ref{tab:keyword_comparison} show that Path-LLM can find answers with the smallest distance. 

\stitle{Exp-6: Case study of keyword search.}
We conduct a case study of keyword search by comparing three graph weighting methods, including the uniform weights, WalkLM-based weights, and our Path-LLM-based weights. This aims to evaluate the effectiveness of Path-LLM weights to find tight groups in terms of graph structure and node semantics.
Our comprehensive case study thoroughly explores the inherent benefits of Path-LLM concerning both graph structure and semantics on PubMed. 
Figure~\ref{fig:case} shows Steiner trees based on four given keywords $Q=\{q_1$: hepatocellular, $q_2$: craniofacial anomalies, $q_3$: attention impairment, $q_4$: intra-uterine$\}$. 
From the graph structure perspective, the Path-LLM-based weight obtains the fewest five edges and the closest ties between nodes. In contrast, graphs generated with uniform and WalkLM-based weights are sparser, with longer and less tight connections between nodes, like the association between the nodes $\{w_5$: acromegaloid$\}$ and $\{w_6$: impaired glucose homeostasis$\}$. From the node semantics perspective, the Path-LLM-based weighted subtree contains rich and medically proven associations. To elaborate, the occurrence of $(q_1)$ hepatocellular issues along with symptoms of $(q_2)$ craniofacial anomalies is highly indicative of the presence of $(w_5)$ acromegaly \cite{carmichael2017association, akirov2021biochemical}. $(w_5)$ may lead to $(w_6)$ impaired glucose homeostasis \cite{carmichael2009utility, chang1992diagnosis} with potential risks of $(q_3)$ attention impairment caused by brain damage \cite{auer1986progress, su2012research, kalra2013hypoglycemia} and $(q_4)$ intra-uterine growth retardation \cite{langer1986link, ogata1985altered, gddotnuemes2020hyperinsulinemic}. In contrast, subtrees derived from uniform and WalkLM-based weights exhibit weaker semantic associations between nodes and lack strong medical evidence.

\stitle{Exp-7: Visualization of Path-LLM embeddings.}
For an intuitive comparison, we visualize the embedding space of different types of nodes learned by WalkLM and our proposed Path-LLM on Cora and PubMed, respectively. 
The embeddings are further transformed into the 2-dimensional Euclidean space via the t-SNE \cite{tsne}. The nodes are colored according to their types. Figures \ref{fig:tsne_cora} and~\ref{fig:tsne_pubmed} show that node representations derived from our Path-LLM are more discriminative from different classes than WalkLM.

\section{CONCLUSIONS}
\label{sec:conclusion}
In this paper, we propose a novel Path-LLM model for unified graph representation learning. The key design of our model involves a sampled selection of our proposed \ltos-based shortest paths to represent the whole network. Then, we utilize a large language model to integrate graph structure into deep semantic embedding space by learning our \ltos-based texts. 
We develop techniques to construct a new weighted graph based on Path-LLM-based embedding and tackle an NP-hard graph querying task of keyword search by finding better answers.  
Comprehensive experiments validate the effectiveness of our Path-LLM model and embeddings. 



\bibliographystyle{ACM-Reference-Format}
\bibliography{ref}

\end{document}